\newcommand{\tenpt}{\fontsize{10pt}{12pt}\selectfont}
\newcommand{\BlankLine}{\vspace{0.5cm}}
\theoremstyle{plain}
\newtheorem{theorem}{Theorem}[section]
\theoremstyle{definition}
\newtheorem{definition}[theorem]{Definition}
\theoremstyle{remark}
\definecolor{lightgray}{gray}{0.95}
\icmltitlerunning{RODEO: Robust Outlier Detection
via Exposing Adaptive Outliers}
\begin{document}

\twocolumn[
\icmltitle{RODEO: Robust Outlier Detection via Exposing Adaptive Out-of-Distribution Samples}



\icmlsetsymbol{equal}{*}

\begin{icmlauthorlist}
\icmlauthor{Hossein Mirzaei}{sut}
\icmlauthor{Mohammad Jafari}{sut}
\icmlauthor{Hamid Reza Dehbashi}{sut}
\icmlauthor{Ali Ansari}{sut}
\icmlauthor{Sepehr Ghobadi}{sut}
\icmlauthor{Masoud Hadi}{iut}
\icmlauthor{Arshia Soltani Moakhar}{sut}
\icmlauthor{Mohammad Azizmalayeri}{sut}
\icmlauthor{Mahdieh Soleymani Baghshah}{sut}
\icmlauthor{Mohammad Hossein Rohban}{sut}
\end{icmlauthorlist}

\icmlaffiliation{sut}{Sharif University of Technology, Tehran, Iran}
\icmlaffiliation{iut}{Isfahan University of Technology, Isfahan, Iran}

\icmlcorrespondingauthor{Mohammad Hossein Rohban}{rohban@sharif.edu}
\icmlcorrespondingauthor{Hossein Mirzaei}{hossein.mirzaeisadeghlou@epfl.ch}


\icmlkeywords{Outlier Detection, Adversarial Robustness, Robust Anomaly Detection, Generative Models}

\vskip 0.3in
]



\printAffiliationsAndNotice{}  

\begin{abstract}
In recent years, there have been significant improvements in various forms of image outlier detection. However, outlier detection performance under adversarial settings lags far behind that in standard settings. This is due to the lack of effective exposure to adversarial scenarios during training, especially on unseen outliers, leading to detection models failing to learn robust features. To bridge this gap, we introduce RODEO, a data-centric approach that generates effective outliers for robust outlier detection. More specifically, we show that incorporating outlier exposure (OE) and adversarial training can be an effective strategy for this purpose, as long as the exposed training outliers meet certain characteristics, including diversity, and both conceptual differentiability and analogy to the inlier samples. We leverage a text-to-image model to achieve this goal. We demonstrate both quantitatively and qualitatively that our adaptive OE method effectively generates ``diverse'' and ``near-distribution'' outliers, leveraging information from both text and image domains. Moreover, our experimental results show that utilizing our synthesized outliers significantly enhances the performance of the outlier detector, particularly in adversarial settings.  The implementation of our work is available at: \url{https://github.com/rohban-lab/RODEO}.

\end{abstract}


\section{Introduction}

\begin{figure}[t]
  \begin{center}
    \centerline{\includegraphics[width=\columnwidth]{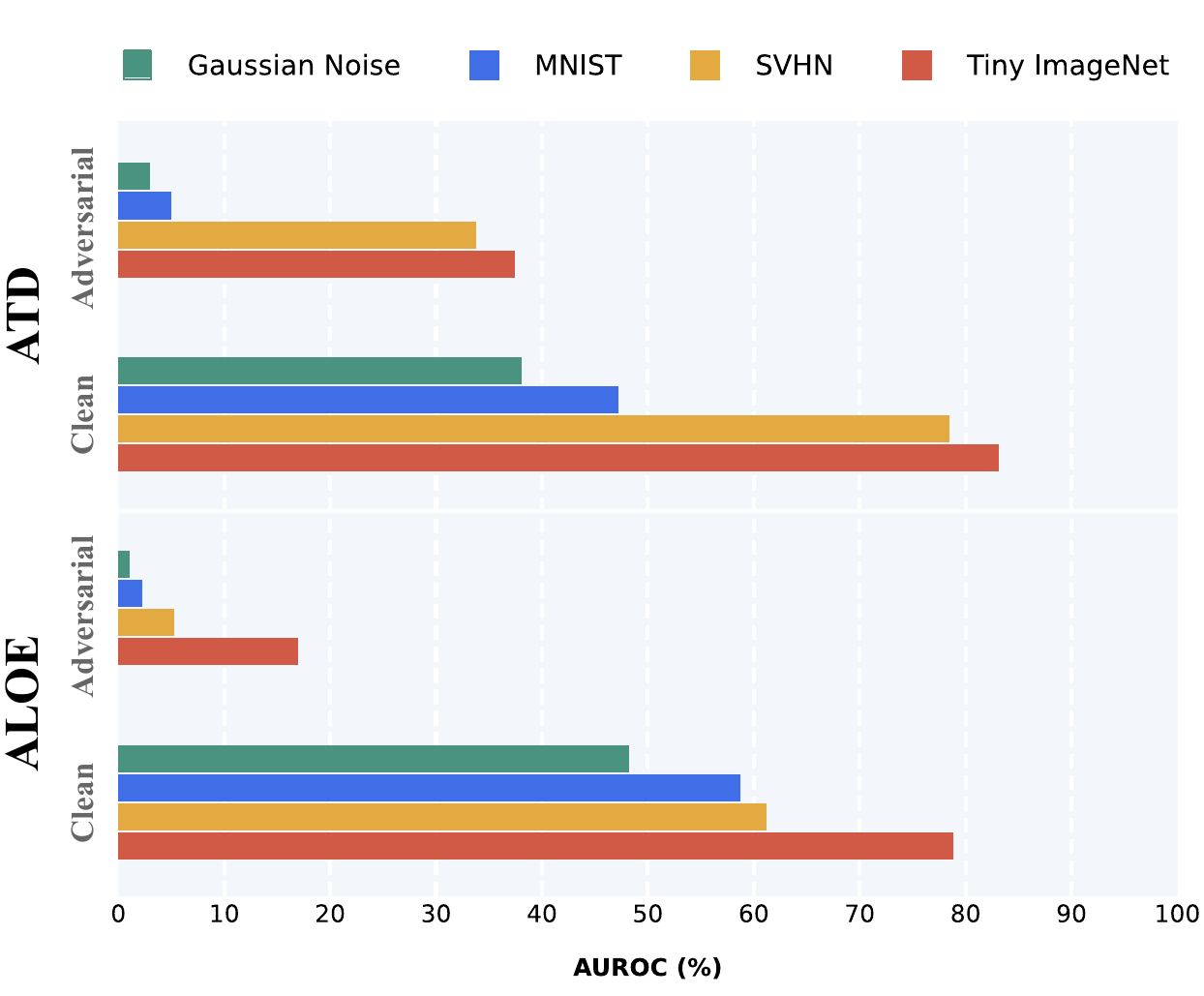}}
        \caption{ ALOE and ATD are robust outlier detection methods that utilize the Tiny ImageNet dataset as OE. In this experiment, while keeping all other aspects of the original methods constant, we replaced Tiny ImageNet with SVHN, MNIST, and Gaussian noise and repeated the experiments for both ALOE and ATD. This replacement led to a notable decline in detection performance for ALOE and ATD on the CIFAR10 vs. CIFAR100 task, particularly under adversarial attack conditions. We attribute this performance drop to the fact that the SVHN, MNIST, and Gaussian Noise distributions are more distant from CIFAR10 (the inlier distribution in this task) compared to Tiny ImageNet.}
    \label{fig:Comparison_Plot_bar}
  \end{center}
\end{figure}

Outlier detection has become a crucial component in the design of reliable open-world machine learning models \cite{drummond2006open,bendale2015towards,perera2021one}. Robustness against adversarial attacks is another important machine learning safety feature \cite{szegedy2013intriguing,goodfellow2014explaining,akhtar2018threat}. Despite the emergence of several promising outlier detection methods in recent years \cite{liznerski2022exposing,cohen2021transformaly,cao2022deep}, they often suffer significant performance drops when subjected to adversarial attacks, which aim to convert inliers into outliers and vice versa by adding imperceptible perturbations to the input data. In light of this, recently, several robust outlier detection methods have been proposed \cite{azizmalayeri2022your,lo2022adversarially,chen2020robust,shao2020open,shao2022open,bethune2023robust,goodge2021robustness,chen2021atom,meinke2022provably,franco2023diffusion}. However, their results are still unsatisfactory, sometimes performing even worse than random detection, and are often focused on specific cases of outlier detection, such as the open-set recognition or tailored to a specific dataset, rather than being broadly applicable. Motivated by this, we aim to provide a robust and unified solution for outlier detection that can perform well in both clean and adversarial settings.\\
\begin{figure*}[thb]
  \begin{center}
    \includegraphics[width=1\linewidth]{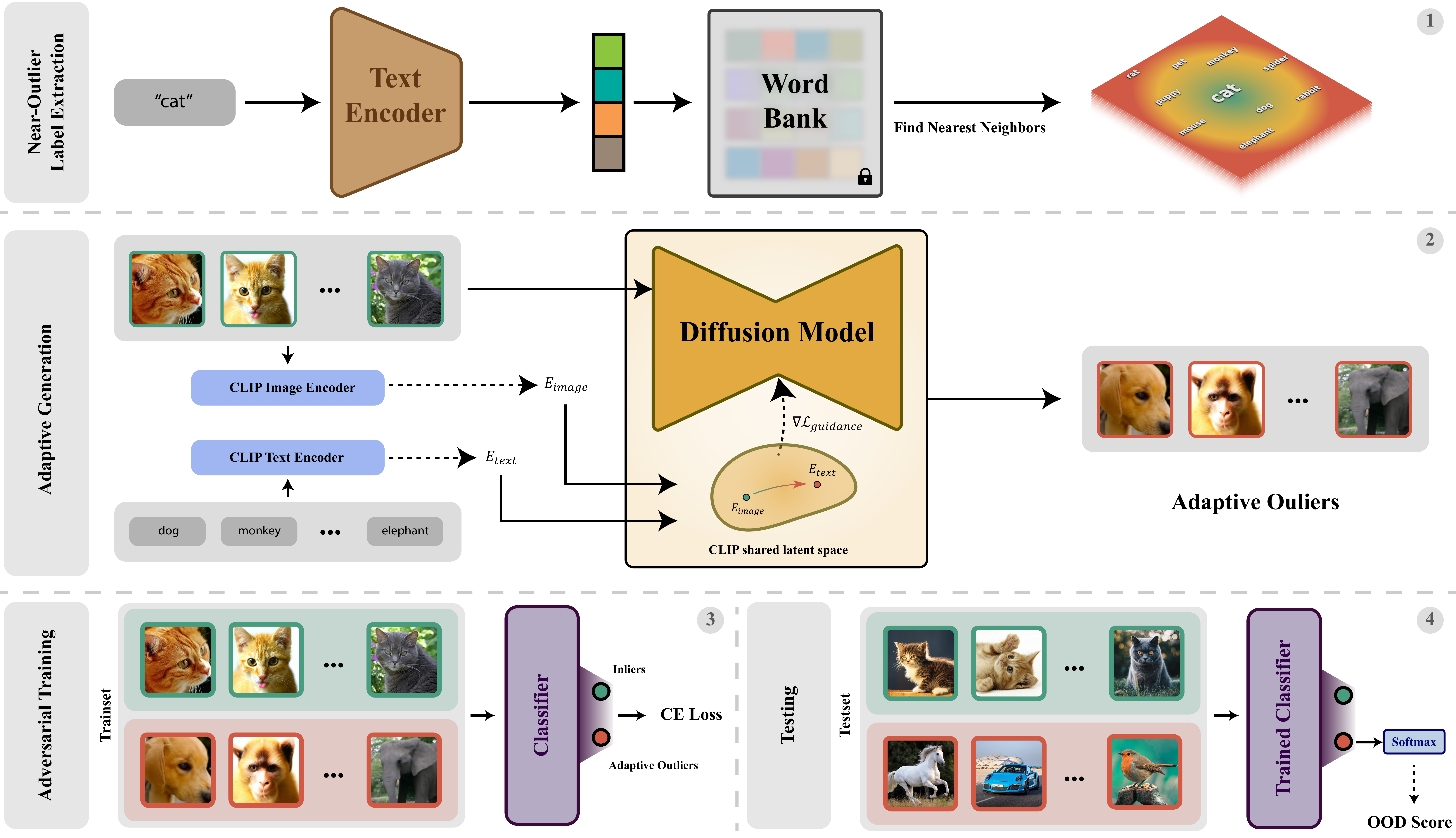}
    \caption{Our proposed adversarially robust outlier detection method is initiated with a\textit{ Near-Outlier Label Extraction}, which finds words analogous to a given input label. These words, combined with inlier training image data, are employed in the \textit{Adaptive Generation} stage to create Near-Outllier data. This stage is followed by \textit{Adversarial Training} using both inliers and generated OE data, utilizing the cross-entropy loss function. During \textit{Testing}, the model processes the input and computes the OOD score as the softmax of the OOD class. (The data filtering steps are not shown in this figure)}
    \label{fig:Method_Plot_Me}
  \end{center}
\end{figure*}
Adversarial training, which is the augmentation of training samples with adversarial perturbations, is among the best practices for making models robust \cite{madry2017towards}. However, this approach is less effective in outlier detection, as outlier patterns are unavailable during training, thus preventing training of the models with the adversarial perturbations associated with these outliers. For this reason, recent robust outlier detection methods use Outlier Exposure (OE) technique \cite{hendrycks2018deep} in combination with adversarial training to tackle this issue \cite{azizmalayeri2022your,chen2020robust,chen2021atom}. In OE, the auxiliary outlier samples are typically obtained from a random and fixed dataset and are leveraged during training. It is clear that these samples should be semantically different from the inlier training set to avoid misleading the detection.

In this study, we experimentally observe (see Fig. \ref{fig:Comparison_Plot_bar} and Sec. \ref{sec3}) that the OE technique's performance is highly sensitive to the distance between the exposed outliers and the inlier training set distribution. Our results suggest that a near-distribution OE set is significantly more beneficial than a distant one. By near-distribution outliers, we refer to image data that possesses semantically and stylistically related characteristics to those of the inlier dataset. 

Our observation aligns with \cite{xing2022artificially}, which suggests that incorporating data near the decision boundary leads to a more adversarially robust model in the classification task. Simultaneously, numerous studies \cite{schmidt2018adversarially,stutz2019disentangling} have demonstrated that adversarial training demands a greater level of sample complexity relative to the standard setting. Thus, several efforts have been made to enrich the data diversity to enhance the adversarial robustness \cite{hendrycks2019usingx,sehwag2021robust,pang2022robustness}.

These observations prompt us to propose the following hypotheses: {\it For adversarial training to be effective in robust outlier detection, the OE samples need to be diverse, near-distribution, and conceptually distinguishable from the inlier samples.} We have conducted numerous extensive ablation studies (Sec. \ref{Ablation_Section}), and provided theoretical insights (Sec. \ref{Theory}) to support these claims.

Driven by the mentioned insights, we introduce RODEO (Robust Outlier Detection via Exposing adaptive Out-of-distribution samples), a novel method that enhances the robustness of outlier detection by leveraging an adaptive OE strategy. Our method assumes access to the text label(s) describing the content of the inlier samples, which is a fair assumption according to the prior works \cite{liznerski2022exposing, adaloglou2023adapting}. Specifically, the first step involves utilizing a simple text encoder to extract labels that are semantically close to the inlier class label(s), based on their proximity within the CLIP \cite{radford2021learning} textual representation space. To ensure the extracted labels are semantically distinguishable from inlier concepts, we apply a threshold filter, precomputed from a validation set. Then, we initiate the denoising process of a pretrained diffusion image generator \cite{dhariwal2021diffusion} conditioned on the inlier images. The backward process of the diffusion model is guided by the gradient of the distance between the extracted outlier labels' textual embeddings and the visual embeddings of the generated images. Through this guidance, the diffusion model is enforced to increase the similarity between the generated images and the extracted labels at each step, and transfer inliers to near-distribution outliers during the process. Finally, another predefined threshold, obtained through validation, filters the generated data belonging to the in-distribution based on the CLIP score. We then demonstrate that adversarial training on a classifier that discriminates the inlier and synthesized OE significantly improves robust outlier detection.  

We evaluate RODEO in both clean and adversarial settings. In the adversarial setting, numerous strong attacks, including PGD-1000 \cite{madry2017towards}, AutoAttack \cite{croce2020reliable}, and Adaptive Auto Attack \cite{liu2022practical}, are employed for robustness evaluation. Our experiments are conducted across various common outlier detection setups, including Novelty Detection (ND), Open-Set Recognition (OSR), and Out-of-Distribution (OOD) detection. It is noteworthy that previous robust outlier detection methods were primarily limited to specific types of outlier setups. In these experiments, RODEO's performance is compared against recent and representative outlier detection methods. The compared methods, including EXOE \cite{liznerski2022exposing} and PLP \cite{adaloglou2023adapting}, utilized a pretrained CLIP as their detector backbone. The results indicate that RODEO establishes significant performance in adversarial settings, surpassing existing methods by up to 50\% in terms of AUROC detection, and achieves competitive results in clean settings. Moreover, through an extensive ablation study, we evaluated our adaptive OE method pipeline in comparison to alternative OE methods \cite{lee2018simple, tao2023non, kirchheim2022outlier, du2022vos, mirzaei2022fake}, including both baseline and recent synthetic outlier methods such as Dream-OOD \cite{du2023dream}, which utilizes Stable Diffusion \cite{rombach2022high} trained on 5 billion data samples as its generative backbone. In the ablation study, we analyze why RODEO outperforms other alternative OE methods.

\begin{figure*} 
  \begin{center}
    \includegraphics[width=0.95\linewidth]{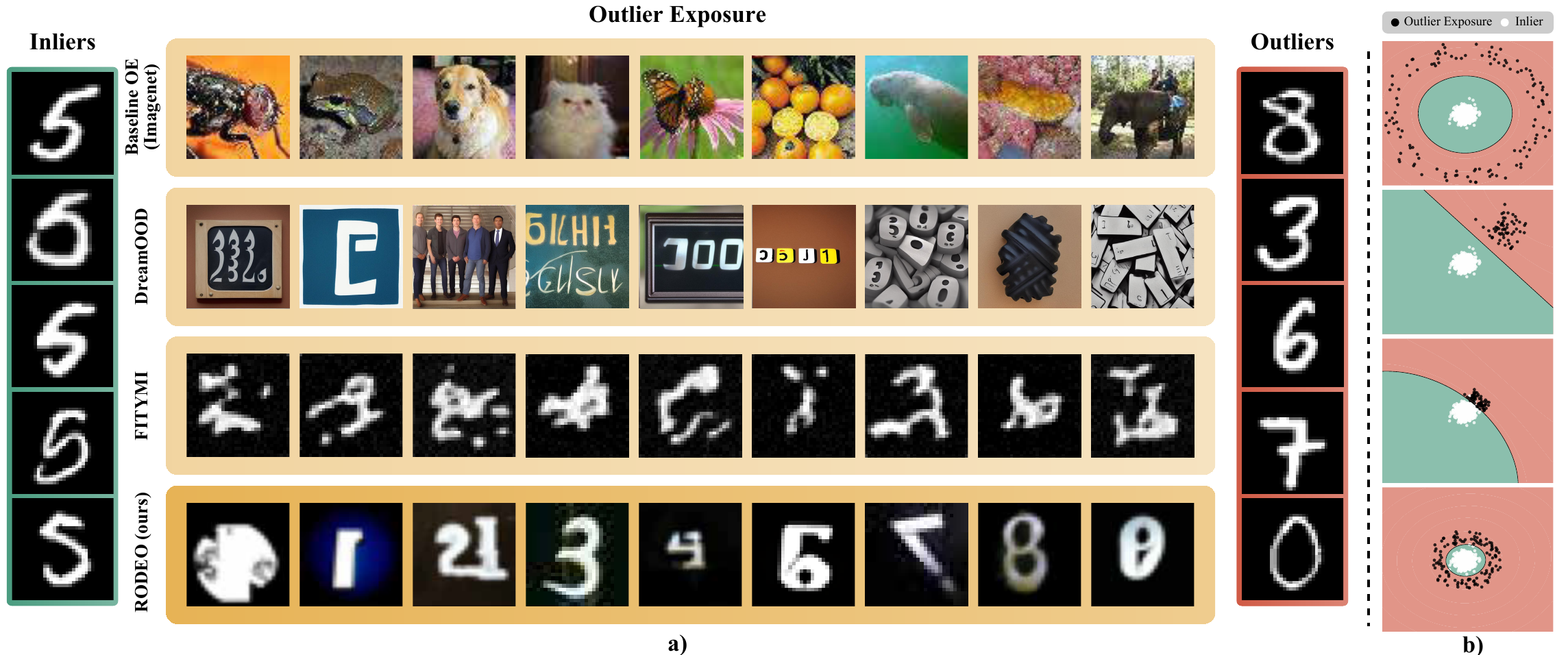}
    \caption{An overview of outlier data from different OE techniques. FITYMI considers image domain information exclusively. Dream-OOD utilizes both text and image domains, but initiating generation from the embedding space makes this method highly biased toward its prior knowledge of the generative backbone, Stable Diffusion. In contrast, RODEO shifts data from inlier to outlier while operating in pixel space. To provide further intuition about the importance of diversity and the distance of the OE from the in-distribution, we compute features for inliers and generated outlier data via a pretrained ViT model \cite{dosovitskiy2020image}, and apply t-SNE \cite{van2008visualizing} to visualize the data in 2D. We then find decision boundaries of the data with SVM \cite{cortes1995support} and present them on the right side of each generated OE example. Our OE samples are both near-distribution and diverse.}
    \label{fig:Method_Plot}
  \end{center}
\end{figure*}

\section{Related Work}
\textbf{Outlier Detection.} \   Several works have been proposed in outlier detection, with the goal of learning the distribution of inlier samples. Some methods such as CSI \cite{tack2020csi} do this with self-supervised approaches. On the other hand, many methods, such as MSAD \cite{reiss2021mean}, Transformaly \cite{cohen2021transformaly}, ViT-MSP \cite{NEURIPS2021_3941c435}, and Patchcore \cite{roth2021total}, aim to leverage knowledge from pre-trained models. EXOE \cite{liznerski2022exposing} and PLP \cite{adaloglou2023adapting} utilized text and image data for the detection task, employing a CLIP model that was pretrained on 400 million data points. Furthermore, some other works have pursued outlier detection in an adversarial setting, including APAE \cite{goodge2021robustness}, PrincipaLS \cite{lo2022adversarially}, OCSDF \cite{bethune2023robust}, and OSAD \cite{shao2022open}. ATOM \cite{chen2021atom}, ALOE \cite{chen2020robust}, and ATD \cite{azizmalayeri2022your} achieved relatively better results compared to others by incorporating OE techniques and adversarial training. However, their performance falls short (as presented in Fig. \ref{fig:Comparison_Plot_bar}) when the inlier set distribution is far from their fixed OE set. 
 For more details about previous works, see Appendix (Sec. \ref{detailed_base}).\\ 
 
\textbf{Outlier Exposure Methods.}
MixUp \cite{zhang2017mixup} adopts a more adaptive OE approach by blending ImageNet samples with inlier samples to create outlier samples closer to the in-distribution. FITYMI \cite{mirzaei2022fake} introduced an OE generation pipeline using a diffusion generator trained on inliers but halted early to create synthetic images that resemble inliers yet display clear differences. The GOE method \cite{kirchheim2022outlier} employs a pretrained GAN to generate synthetic outliers by targeting low-density areas in the inlier distribution. Dream-OOD \cite{du2023dream} uses both image and text domains to learn visual representations of inliers in a latent space and samples new images from its low-likelihood regions. 
Other OE methods, such as VOS and NPOS \cite{du2022vos, tao2023nonparametric}, generate embeddings instead of actual image data.

\section{Theoretical Insights}\label{Theory}
\label{sec3}

In this section, we provide some insightful examples that highlight the need for near-distribution and diverse OE in outlier detection. Our setup is the following: We assume that the inlier data come from $\mathcal{N}(0, \sigma^2 I)$ and the test-time outlier is distributed according to $\mathcal{N}(a, \sigma^2 I)$. Furthermore, let $\mathcal{N}(a^\prime, \sigma^2 I)$ be the OE data distribution. We also assume equal class a priori probabilities. We train a classifier using a balanced mixture of inlier and OE samples. However, at test time, the classifier is tested against a balanced mixture of inlier and outlier samples, rather than the OE data.\\

\textbf{Near-distribution OE is beneficial}
 \begin{theorem}
    Let $\sigma = 1$, and assume that $\| a^\prime \| \geq \| a \|$, reflecting that the OE is far from the distribution. Let $\theta$ be the angle between $a$ and $a^\prime$. Under the setup mentioned in Sec. \ref{sec3}, for fixed $\theta$ and $a$, and small $\epsilon$, the optimal Bayes' adversarial error under $\ell_2$ norm bounded attacks with $\epsilon$ norm increases as the OE moves farther from the distribution, i.e., as $\| a^\prime \|$ increases. More specifically, the adversarial error is:
 \small
\begin{equation}
\begin{split}
  1 - \Phi\left( \frac{\|a^\prime\|}{2} - \epsilon \right) +  1 - \Phi\left( \frac{\|a^\prime\|}{2} - c - \epsilon \right),
\end{split}
\end{equation}
with $c := \| a^\prime \| - \| a \| \cos(\theta)$, and $\Phi(.)$ being the standard normal cumulative distribution function.
 \end{theorem}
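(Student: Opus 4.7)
The plan is to first determine the Bayes-optimal classifier produced by training on the balanced mixture $\mathcal{N}(0, I)$ vs.\ $\mathcal{N}(a', I)$, then evaluate its adversarial error against the \emph{actual} test mixture $\mathcal{N}(0, I)$ vs.\ $\mathcal{N}(a, I)$, and finally check monotonicity in $\|a'\|$. Because the covariances match and the priors are equal, the optimal detector is the linear halfspace classifier with normal direction $\hat{u} := a'/\|a'\|$ and threshold $\|a'\|/2$: declare inlier iff $\langle \hat{u}, x\rangle < \|a'\|/2$. Since the score $\langle \hat{u}, x\rangle$ is the only relevant statistic, and an $\ell_2$ perturbation of norm at most $\epsilon$ can shift this score by at most $\pm\epsilon$ (attained by moving along $\pm \hat{u}$), the whole problem collapses to a one-dimensional Gaussian tail calculation.

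For a clean inlier $X \sim \mathcal{N}(0, I)$, the score $\langle \hat{u}, X\rangle$ is standard normal, and the worst-case adversary induces misclassification iff $\langle \hat{u}, X\rangle + \epsilon \geq \|a'\|/2$, which has probability $1 - \Phi(\|a'\|/2 - \epsilon)$. For a clean outlier $X \sim \mathcal{N}(a, I)$, the score is $\mathcal{N}(\|a\|\cos\theta, 1)$ (using that the projection of $a$ onto $\hat{u}$ has length $\|a\|\cos\theta$), and the worst-case adversary succeeds iff $\langle \hat{u}, X\rangle - \epsilon \leq \|a'\|/2$, with probability $\Phi(\|a'\|/2 + \epsilon - \|a\|\cos\theta) = 1 - \Phi(\|a'\|/2 - c - \epsilon)$ after substituting $c = \|a'\| - \|a\|\cos\theta$. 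Adding the two yields the claimed formula.

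To establish monotonicity, I would differentiate the sum with respect to $r := \|a'\|$, treating $\|a\|$ and $\theta$ as constants. The first term contributes $-\tfrac{1}{2}\phi(r/2 - \epsilon)$; the second, whose argument simplifies to $-r/2 + \|a\|\cos\theta - \epsilon$, contributes $+\tfrac{1}{2}\phi(-r/2 + \|a\|\cos\theta - \epsilon)$. Since $\phi$ is strictly decreasing in $|z|$, the derivative is positive precisely when the second argument is closer to $0$ than the first, which for sufficiently small $\epsilon$ (so that both arguments end up on opposite sides of zero once $r \geq \|a\| \geq \|a\|\cos\theta$) reduces to the clean inequality $\|a\|\cos\theta > 2\epsilon$.

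The main obstacle, and the only real subtlety, is the sign bookkeeping in this last step: comparing $\phi$-values via absolute values requires knowing whether $r/2 - \epsilon$ and $r/2 - \|a\|\cos\theta + \epsilon$ lie on the same or opposite sides of zero. The hypotheses $\|a'\| \geq \|a\|$ together with the ``small $\epsilon$'' condition (read as $2\epsilon \leq \|a\|\cos\theta$) resolve the ambiguity and yield strict monotonicity. Everything else is routine Gaussian computation.
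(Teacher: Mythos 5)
Your proposal is correct and follows essentially the same route as the paper: project onto $a'/\|a'\|$, reduce everything to one-dimensional Gaussian tail probabilities shifted by $\pm\epsilon$, and then differentiate in $\|a'\|$ and compare the two values of the normal density. The only slip is the parenthetical claim that the two arguments $\|a'\|/2-\epsilon$ and $-\|a'\|/2+\|a\|\cos\theta-\epsilon$ always lie on opposite sides of zero once $\|a'\|\ge\|a\|$; this fails when $\|a\|\cos\theta>\|a'\|/2+\epsilon$ (e.g.\ $\theta=0$ and $\|a'\|$ only slightly larger than $\|a\|$), so the comparison of $|\cdot|$ does not universally reduce to $\|a\|\cos\theta>2\epsilon$ alone. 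The fix is what the paper does: $|x_2|\le x_1$ (with $x_1=\|a'\|/2-\epsilon\ge 0$) is equivalent to the pair of linear inequalities $x_2\le x_1$ and $-x_2\le x_1$, i.e.\ $\|a\|\cos\theta\le\|a'\|$ (guaranteed by the hypothesis $\|a'\|\ge\|a\|$) together with $\|a\|\cos\theta\ge 2\epsilon$ (the small-$\epsilon$ condition), with no case split on the sign of $x_2$ needed; your conclusion is therefore unaffected.
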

 \begin{proof}
  Under the mentioned setup, the optimal robust classifier is $f^\star(x) = \frac{a^{\prime\top}}{\|a^\prime\|} (x - \frac{a^\prime}{2}) = \frac{a^{\prime\top}}{\|a^\prime\|} x - \frac{\|a^\prime\|}{2}$, for an adversary that has a budget of at most $\epsilon$ perturbation in $\ell_2$ norm \cite{schmidt2018adversarially}. Now, applying this classifier on the inlier and outlier classes at test time, we get:
{\small
\begin{equation}
   \frac{a^{\prime\top} x}{\|a^\prime\|} \sim \mathcal{N}(0, I),  
\end{equation}}
for an inlier $x$, and also:
{\small
\begin{equation}
   \frac{a^{\prime\top} x}{\|a^\prime\|} \sim \mathcal{N}\left(\frac{a^\top a^\prime}{\|a^\prime\|}, I\right),  
\end{equation}}
for an outlier $x$. Therefore, using the  classifier $f^\star$ to discriminate the inlier and outlier classes, the adversarial error rate would be:
{\small
\begin{equation} \label{eq3}
    1 - \Phi\left( \frac{\|a^\prime\|}{2} - \epsilon\right) + 1 - \Phi\left(\frac{a^\top a^\prime}{\|a^\prime\|} - \frac{\|a^\prime\|}{2} - \epsilon\right),
\end{equation}}
where $\Phi(.)$ is the CDF for the inlier distribution $\mathcal{N}(0, 1)$. 

Let $\delta = a^\prime - a$, and note that:
{\small
\begin{equation} \label{eq4}
\begin{split}
    \frac{a^\top a^\prime}{\|a^\prime\|} - \frac{\|a^\prime\|}{2} = \frac{(a^\prime - \delta)^\top a^\prime}{\|a^\prime\|} - \frac{\|a^\prime\|}{2} 
    = \frac{\|a^\prime\|}{2} - \frac{\delta^\top a^\prime}{\| a^\prime \|}.     
\end{split}
\end{equation}}

But note that $\frac{\delta^\top a^\prime}{\| a^\prime \|} = \| a^\prime \| - \| a \| \cos(\theta) =: c \geq \| a^\prime \| - \| a \|  \geq 0$, where $\theta$ is the angle between $a^\prime$ and $a$. Hence, by plugging Eq. \ref{eq4} into Eq. \ref{eq3}, the error rate can be written as:  
\small
\begin{equation}
\begin{split}
  1 - \Phi\left( \frac{\|a^\prime\|}{2} - \epsilon \right) +  1 - \Phi\left( \frac{\|a^\prime\|}{2} - c - \epsilon \right).
\end{split}
\end{equation}
But note that the derivative of the above expression with respect to $\| a^\prime \|$ is

\begin{multline*}    
    - \frac{1}{2\sqrt{2\pi}} \exp\left( -\frac{1}{2} \left( \frac{\| a^\prime\|}{2} - \epsilon \right)^2 \right) \\ + \frac{1}{2\sqrt{2 \pi}} \exp\left( -\frac{1}{2} \left( -\frac{\| a^\prime\|}{2} + \| a \| \cos(\theta) - \epsilon \right)^2 \right), 
\end{multline*}
noting that the derivative of $\frac{1}{2} \| a^\prime \| - c$ with respect to $\| a^\prime \|$ is $-\frac{1}{2}$. We note that for $\epsilon \leq \frac{\| a^\prime \|}{2}$, the derivative is positive as long as 
\begin{equation}
    \frac{\| a^\prime \|}{2} - \epsilon \geq \frac{\| a^\prime \|}{2} - \| a \| \cos(\theta) + \epsilon,
\end{equation}
and 
\begin{equation}
    \frac{\| a^\prime \|}{2} - \epsilon \geq -\frac{\| a^\prime \|}{2} + \| a \| \cos(\theta) - \epsilon.
\end{equation}
The first condition is satisfied as long as $\epsilon \leq \| a \| \cos(\theta)$, which is always satisfied for small $\epsilon$ if $\theta \neq \pi/2$. The second condition is also satisfied as $\|a^\prime\| \geq \|a\|$ by the theorem assumptions. Hence, the derivative is positive and the adversarial error rate is increasing by increasing $\| a^\prime \|$.
\end{proof}


 \textbf{Diverse OE is beneficial}

In the last section, we provided an example of why OE data should be near-distribution to be helpful in a simple setup. Now, we give further insights into why the OE should be diverse. To show this, we introduce the notion of the worst-case outlier detection error. 
\begin{definition}
    As the outlier distribution is {\it not known} during training, we seek to optimize for its worst-case performance, i.e.:
\begin{equation}
    R(f) := \sup_{\| a \| = \alpha } \mathbb{E}_{x \sim p}(\ell(f(x), y)),
\end{equation}
where we assume 0/1 loss for simplicity, and $p := 0.5 \mathcal{N}(0, \sigma^2 I) + 0.5 \mathcal{N}(a, \sigma^2 I)$.
\end{definition}
\begin{theorem}
Assuming $\sigma \approx 0$, the optimal worst-case outlier detection error under the setup of Sec. \ref{sec3} is $R(f^\star) = 50\%$. Additionally, if the OE is sampled from a Gaussian mixture, with infinitely many mixture components, whose means are sampled uniformly from the hypersphere $\mathcal{S}^{d-1}(\alpha),$ then $R(f^\star) = 0\%$.    
\end{theorem}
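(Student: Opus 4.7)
The plan is to prove the two parts separately, starting from the fact that with $\sigma \approx 0$ the inlier distribution collapses to a point mass at the origin and every outlier mean lies on the sphere $\mathcal{S}^{d-1}(\alpha)$, so the optimal classifier is essentially determined by which regions in space are labeled "inlier" versus "outlier." In both cases I will first identify the optimal Bayes rule $f^\star$ trained on the given OE distribution and then evaluate its error on the worst-case true outlier mean with $\| a \| = \alpha$.

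For the single-component OE case ($R(f^\star) = 50\%$), I would proceed as follows. With a single OE mean $a^\prime$, the Bayes-optimal classifier is the linear rule $f^\star(x) = \mathrm{sign}\bigl(a^{\prime\top}(x - a^\prime/2)\bigr)$, as invoked in the previous theorem. As $\sigma \to 0$, the inlier point mass at $0$ satisfies $a^{\prime\top}(0 - a^\prime/2) = -\|a^\prime\|^2/2 < 0$, so inliers are classified correctly with probability tending to $1$. An outlier at $a$ is classified correctly iff $a^{\prime\top} a > \|a^\prime\|^2/2$. The adversary's worst choice is $a = -\alpha \, a^\prime/\|a^\prime\|$, for which $a^{\prime\top} a = -\alpha \|a^\prime\| < \|a^\prime\|^2/2$, so outliers are misclassified with probability tending to $1$. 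Using the balanced mixture $p$, this yields $R(f^\star) \to 0.5 \cdot 0 + 0.5 \cdot 1 = 0.5$. One should also note that $50\%$ is an upper bound for the optimal worst-case classifier because the trivial rule "always declare inlier" achieves exactly $50\%$; so $R(f^\star) = 50\%$ is indeed the optimum.

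For the infinite-mixture case ($R(f^\star) = 0\%$), I would argue as follows. When the OE is a Gaussian mixture with means sampled uniformly from $\mathcal{S}^{d-1}(\alpha)$ and we let the number of components grow to infinity, the OE density becomes the convolution of $\mathcal{N}(0, \sigma^2 I)$ with the uniform measure on $\mathcal{S}^{d-1}(\alpha)$, which is rotationally symmetric and supported (in the $\sigma \to 0$ limit) precisely on the sphere of radius $\alpha$. By rotational symmetry of the OE distribution and of the inlier, the Bayes-optimal classifier depends only on $\|x\|$: $f^\star$ assigns $x$ to the inlier class when $\|x\| < \tau$ and to the outlier class when $\|x\| > \tau$, for some threshold $\tau \in (0, \alpha)$. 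As $\sigma \to 0$, the inlier mass concentrates at $\|x\| = 0$ and the OE mass at $\|x\| = \alpha$, so any such $\tau$ achieves vanishing training error. At test time, for every $a$ with $\|a\|=\alpha$, the outlier distribution concentrates at $\|x\| = \alpha > \tau$, so outliers are classified correctly regardless of the direction of $a$; combined with perfect inlier classification, this gives $R(f^\star) \to 0$, uniformly over the choice of $a$.

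The main obstacle is the $\sigma \to 0$ limit in the infinite-mixture part: one must justify that the Bayes classifier is rotationally symmetric and reduces to a threshold in $\|x\|$, and that passing from the uniform mixture on the sphere (as a limit of discrete mixtures) preserves optimality. I would handle this by invoking rotational invariance of both the inlier and OE distributions together with uniqueness of the Bayes rule up to measure-zero sets, and by a direct density-ratio computation showing that the log-likelihood ratio is monotone in $\|x\|$ once $\sigma$ is small enough. The remaining steps — classifier form in the single-component case and the adversarial choice $a = -\alpha a^\prime/\|a^\prime\|$ — are straightforward plug-ins.
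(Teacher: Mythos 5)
Your proposal is correct and follows essentially the same route as the paper: the adversarial placement $a = -\alpha\, a^\prime/\|a^\prime\|$ defeats the linear classifier trained on a single-Gaussian OE (giving $50\%$), while the rotationally symmetric infinite mixture yields a Bayes rule that thresholds $\|x\|$ (the paper takes the sphere of radius $\alpha/2$) and hence catches every outlier mean on $\mathcal{S}^{d-1}(\alpha)$, giving $0\%$. Your extra care --- the explicit upper bound via the always-inlier rule and the monotone likelihood-ratio justification of the spherical decision boundary --- only tightens the paper's more informal sketch and does not change the argument.
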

\begin{proof}

First note that $f^\star$, which is the optimal classifier, takes a linear form of $f^\star(x) = \frac{a^{\prime \top}} {\| a^\prime \|}(x - \frac{a^\prime}{2})$.
Also, note that if the outlier distribution mean value, $a$, is far from that of the OE, $a^\prime$, the risk $R(f)$ would grow large and become $50\%$. That is, in finding the supremum, $a$ would be placed far from $a^\prime$, resulting in $R(f^\star) = 50\%$; i.e. one plausible solution is $a = - \alpha \frac{a^\prime}{\| a^\prime \|}$, which leads to erroneous output $f^\star(x) < 0$, for all outlier samples $x$ concentrated around $a$. 

However, we note that for this worst-case scenario, a better OE choice, than a simple Gaussian distribution, would be to first randomly pick a center $a^\prime$ uniformly from the sphere centered around zero, with radius $\alpha$, denoted as $\mathcal{S}^{d-1}(\alpha)$, where $d$ is feature space dimensionality. This results in the marginal OE distribution, $p_{OE}$ as follows:
\begin{equation} \label{mixOE}
    p_{OE}(x) \propto \int_{\mathcal{S}^{d-1}(\alpha)} \mathcal{N}(a^\prime, \sigma^2 I) d a^\prime,
\end{equation}
For this choice of OE, the optimal Bayes' classifier that discriminates the inliers and OE, would be a hypersphere centered around zero with radius $\frac{\alpha}{2}$. It is evident that for this classifier, the worst-case risk $R$, would be $0$. The intuition behind this is that wherever the $a$ is placed in taking the supremum, the classifier would detect it as an outlier. 
\end{proof}

This simple example highlights the need for a {\it diverse} OE distribution in solving the outlier detection in the {\it worst-case}. Inspired by this example, one could approach constructing $p_{OE}$ through conditioning the OE distribution on the inlier samples $x_0$, and a target semantic label $y$ that is distinct from $x_0$ original semantic class, $y_0$; i.e. $\hat{p}(x | x_0, y)$, and assuming $\hat{p}$ as a generative process that minimally transforms $x_0$ into an outlier sample with semantic label $y$. This is similar to OE samples in the previous theorem, where Gaussian kernels with means deviating large enough from the in-distribution (with distance $\alpha$ from the inlier class mean)  constitute the OE distribution. To make this analogy happen, the outlier label $y$ has the role of making $x$ sufficiently distant away from the in-distribution. This way, $p_{OE}$ would become:
\begin{equation}
    p_{OE}(x) = \sum_y \int \hat{p}(x | x_0, y) p(x_0) p(y | y_0) dx_0,
\end{equation}
where $p(x_0)$ is the inlier class distribution and $p(y | y_0)$ is the prior distribution over the target classes that are distinct from the inlier semantic class $y_0$. This is similar to the form of OE distribution in Eq. \ref{mixOE}.

\section{Method}

\textbf{Motivation.}
To develop a robust outlier detection model, the Outlier Exposure (OE) technique appears to be crucial \cite{chen2020robust,chen2021atom,azizmalayeri2022your}; otherwise, the model would lack information about the adversarial patterns in the outlier data. However, the Baseline OE technique, which involves leveraging outliers from a presumed dataset, leads to unsatisfactory results in situations where the auxiliary exposed outliers deviate significantly from the in-distribution. Motivated by these factors, we aim to propose an adaptive OE technique that attempts to generate diverse and near-distribution outliers, which can act as a proxy for the real inference-time outliers. The subsequent sections will provide a detailed description of the primary stages of our methodology. Our method is outlined in Fig.~\ref{fig:Method_Plot_Me}. 

\subsection{Generation Step}

\textbf{Near-outlier Label Extraction.} 
Utilizing a text encoder and given the class labels of the inlier samples, we identify words closely related to them. To achieve this, we utilize Word2Vec \cite{mikolov2013efficient}, a renowned and simple text encoder, to obtain the embeddings of the inlier labels, denoted as \( y_{\text{inlier}} \) (e.g., \textit{``screw''}), and subsequently retrieve their nearest labels (e.g., \textit{``nail''}). By comparing these with a pre-computed threshold (\( \tau_{\text{text}} \)), we refine the extracted labels by excluding those very similar to the inlier labels. This threshold is computed using ImageNet labels as the validation set and the CLIP text encoder embedding to compute word similarities. Utilizing these extracted near labels in subsequent steps leads to the generation of semantically-level outlier samples (those that are semantically different from the inliers). To further enhance the diversity of synthesized outliers, we also consider pixel-level OOD samples (those that differ from the in-distribution at the texture level). For this purpose, we incorporate texts containing negative attributes of the inlier labels (e.g., \textit{``broken screw''}), and the union of these two sets of labels forms near outliers (n-outliers) labels: \( y_{\text{n-outliers}} \), which will guide the image generation process utilizing the CLIP model in the next steps. More details about near-label set extraction and threshold (\( \tau_{\text{text}} \)) computing are available in the Appendix (Sec.~\ref{prompt_gen}).

\textbf{CLIP Guidance.} The CLIP model is designed to learn joint representations between the text and image domains, and it comprises a pre-trained text encoder and an image encoder. The CLIP model operates by embedding both images and texts into a shared latent space. This allows the model to assign a CLIP score that evaluates the relevance of a caption to the actual content of an image. In order to effectively extract knowledge from the CLIP in image generation, we propose the $\mathcal{L}_{\text{guidance}}\left(\boldsymbol{x}_{\text{gen}}, \boldsymbol{y}_{\text{n-outliers}}\right)$ loss, which aims to minimize the cosine similarity between the CLIP space embeddings of the generated image $\boldsymbol{x}_{\text{gen}}$ and the target text (extracted outlier labels) \ $\boldsymbol{y}_{\text{n-outliers}}$, i.e. $\mathcal{D}\left(x, y\right) = \frac {x^\top y}{\|x\|\|y\|}$:
  

\begin{equation}
\resizebox{0.85\linewidth}{!}{%
$ \mathcal{L}_{\text{guidance}}\left(\boldsymbol{x}_{\text{gen}}, \boldsymbol{y}_{\text{n-outliers}}\right) = -\mathcal{D}\left(E_I\left(\boldsymbol{x}_{\text{gen}}\right), E_T\left(\boldsymbol{y}_{\text{n-outliers}}\right)\right).$
}
\end{equation}

Here, $E_I$ and $E_T$ represent the embeddings extracted by the CLIP image and text encoders, respectively. During the conditional diffusion sampling process, the gradients from the $\mathcal{L}_{\text{guidance}}$ will be used to guide the inlier sample towards the near outliers.

\textbf{Conditioning on Image.}\ We condition the denoising process on the inlier images instead of initializing it with random Gaussian noise. Specifically, we employ a pre-trained diffusion generator and start the diffusion process from a random time step, initiated with the inliers with noise (instead of beginning with pure Gaussian noise). Based on previous works \cite{meng2021sdedit,kim2022diffusionclip}, we set $t_{0} \sim U(0.3T, 0.6T)$, where $T$ represents the number of denoising steps in the regular generation setup.

Randomly choosing $t_{0}$ for the denoising process leads to the generation of diverse outliers since, with smaller $t_0$, inlier images undergo minor changes, while relatively larger $t_0$ values lead to more significant changes, thereby increasing the diversity of generated outlier samples. We then progressively remove the noise with CLIP guidance to obtain a denoised result that is both outlier and close to the in-distribution: 
$ \ \  {x}_{t-1} \sim \mathcal{N} ( \mu( {x}_t|y_{\text {NOOD}}) + s \cdot \Sigma ( {x}_t|y_{\text {n-outliers}})\cdot \nabla_{{x}_t} (\mathcal{L}_{\text {guidance }}\left( {x}_{{t}}, y_{\text {n-outliers}}\right)), \ \Sigma( {x}_t|y_{\text {n-outliers}}))$, where the scale coefficient $s$ controls the level of perturbation applied to the model. Please see Appendix (Sec. \ref{gen_step_app}) for more details about the preliminaries of diffusion models \cite{sohl2015deep,ho2020denoising} and the generation step. Additionally, refer to Fig. \ref{fig:Samples_Plot2} for some examples of generated images.

\begin{table*}[t]
    \caption{}
    \begin{subtable}{1\textwidth}
    \centering
    \setlength{\tabcolsep}{4pt}
    \caption{
    In this table, we assess ND methods on various datasets against PGD-1000, AutoAttack (AA), AdaptiveAutoAttack (A3), and a clean setting, using the AUROC(\%) metric. Perturbations: $ \epsilon = \frac{8}{255} $ for low-res and $ \epsilon = \frac{2}{255} $ for high-res datasets. The best scores are \textbf{bolded}.}
    \vspace{0.1in}
    \resizebox{\linewidth}{!}{
    \begin{tabular}{ll*{5}{c}*{6}{c}c} 
    \toprule

     \noalign{\smallskip}
     \multirow{2}{*}{\textbf{\tenpt{Method}}}& \multirow{2}{*}{\textbf{\tenpt{Attack}}} & \multicolumn{5}{c}{\textbf{\tenpt{Low-Res Datasets}}} &
     \multicolumn{6}{c}{\textbf{\tenpt{High-Res Datasets}}} &
     \multirow{2}{*}{\textbf{\tenpt{Mean}}}\\
    \cmidrule(lr){3-7}  
    \cmidrule(lr){8-13}  
    & & \textbf{CIFAR10} & \textbf{CIFAR100} & \textbf{MNIST} & \textbf{FMNIST} & \textbf{SVHN} & \textbf{MVTecAD} & \textbf{Head-CT} & \textbf{BrainMRI} & \textbf{Tumor Detection} & \textbf{Covid19} & \textbf{Imagenet-30} \\
    \specialrule{0.8pt}{\aboverulesep}{\belowrulesep}

    \textsc{CSI} & Clean / PGD & 94.3 / 2.7& 89.6 / 2.5& 93.8 / 0.0& 92.7 / 4.1& \textbf{96.0} / 1.3& 63.6 / 0.0& 60.9 / 0.1& 93.2 / 0.0& 85.3 / 0.0& 65.1 / 0.0& 91.6 / 0.3 & 84.2 / 1.0\\
    \specialrule{0.8pt}{\aboverulesep}{\belowrulesep}
    
    \textsc{MSAD} & Clean / PGD & 97.2 / 0.0& 96.4 / 2.6& 96.0 / 0.0& 94.2 / 0.0& 63.1 / 0.5& 87.2 / 0.4& 59.4 / 0.0&\textbf{99.9} / 1.5& {95.1} / 0.0& {89.2} / 4.0&96.9 / 0.0 & {88.6} / 0.8 \\
    \specialrule{0.8pt}{\aboverulesep}{\belowrulesep}
    
    \textsc{Transformaly} & Clean / PGD & {98.3} / 0.0& {97.3} / 4.1& 94.8 / 9.9& 94.4 / 0.2& 55.4 / 0.3& {87.9} / 0.0& 78.1 / 5.8& {98.3} / 4.5& \textbf{97.4} / 6.4& \textbf{91.0} / 9.1&  {97.8} / 0.0 & \textbf{90.1} / 3.7\\
    \specialrule{0.8pt}{\aboverulesep}{\belowrulesep}
    
    \textsc{EXOE} & Clean / PGD & \textbf{99.6} / 0.3& \textbf{97.8} / 0.0& 96.0 / 0.0& {94.7} / 1.8& 68.2 / 0.0& 76.2 / 0.2& 82.4 / 0.1 & 86.2 / 0.1& 79.3 / 0.0&72.5 / 0.8 &\textbf{98.1} / 0.0 & 86.5 / 0.3 \\
    \specialrule{0.8pt}{\aboverulesep}{\belowrulesep}
    
    \textsc{PatchCore} & Clean / PGD & 68.3 / 0.0& 66.8 / 0.0& 83.2 / 0.0& 77.4 / 0.0& 52.1 / 3.0& \textbf{99.6} / 6.5& \textbf{98.5} / 1.3& 91.4 / 0.0& 92.8 / 9.2& 77.7 / 3.8& 74.2 / 0.0 & 80.2 / 2.3 \\
    \specialrule{0.8pt}{\aboverulesep}{\belowrulesep}
    
    \textsc{PrincipaLS} & Clean / PGD & 57.7 / {23.6}& 52.0 / {15.3}& {97.3} / {76.4}& 91.0 / {60.8}& 63.0 / {30.3}& 63.8 / \textbf{24.0}& 68.9 / {26.8}& 70.2 / {32.9}& 73.5 / {24.4}& 54.2 / {15.1}& 61.4 / 18.7 & 68.4 / {31.7} \\
    \specialrule{0.8pt}{\aboverulesep}{\belowrulesep}
    
    \textsc{OCSDF} & Clean / PGD & 57.1 / 22.9& 48.2 / 14.6& 95.5 / 60.8& 90.6 / 53.2& 58.1 / 23.0& 58.7 / 4.8& 62.4 / 13.0& 63.2 / 18.6& 65.2 / 16.3& 46.1 / 8.4&  62.0 / {24.8} & 64.3 / 23.7\\
    \specialrule{0.8pt}{\aboverulesep}{\belowrulesep}
    
    \textsc{APAE} & Clean / PGD & 55.2 / 0.0& 51.8 / 0.0& 92.5 / 21.3& 86.1 / 9.7& 52.6 / 16.5& 62.1 / 3.9& 68.1 / 6.4& 55.4 / 9.1& 64.6 / 15.0& 50.7 / 9.8&54.5 / 12.8 & 63.0 / 9.5\\
    \specialrule{0.8pt}{\aboverulesep}{\belowrulesep}

   \multirow{2}{*}{\textsc{RODEO (ours)}}   & Clean / PGD & 87.4 / \textbf{70.2} & 79.6 / \textbf{62.1} & \textbf{99.4} / \textbf{94.6} & \textbf{95.6} / \textbf{87.2} & {78.6} / \textbf{33.8} & 61.5 / {14.9} & {87.3} / \textbf{68.6} & 76.3 / \textbf{68.4} & 89.0 / \textbf{67.0} & 79.6 / \textbf{58.3} & 86.1 / \textbf{73.5} & 83.7 / \textbf{63.5} \\
    \noalign{\smallskip}
    \cdashline{2-14}
    \noalign{\smallskip}
      & AA / A3 & 69.3 / 70.5 & 61.0 / 61.3 & 95.2 / 94.0 & 87.6 / 87.0 & 33.2 / 31.8 & 14.2 / 13.4 & 68.4 / 68.1 & 70.5 / 67.7 & 66.9 / 65.6 & 58.8 / 57.6 &  76.8 / 72.4 & 63.8 / 62.6\\
    
    \specialrule{0.8pt}{\aboverulesep}{\belowrulesep}
     
    \end{tabular}}
    \label{Table1:Novelty Detection}
    \end{subtable}
    \hfill
    \begin{subtable}{0.49\textwidth}
    \begin{minipage}[t]{\linewidth}
    \setlength{\tabcolsep}{8pt} 

\caption{Comparison of OSR method performance on MNIST, FMNIST, CIFAR10, and CIFAR100 datasets under PGD-1000, AutoAttack (AA), AdaptiveAutoAttack (A3), and clean conditions, evaluated using AUROC(\%).}

\vspace{0.1in} 
\resizebox{\columnwidth}{!}{
\begin{tabular}{ll*{5}{c}} 
\toprule
\noalign{\smallskip}

    \multicolumn{1}{l}{\multirow{3}{*}{\textbf{\tenpt{Method}}}}& \multirow{3}{*}{\textbf{\tenpt{Attack}}} & \multicolumn{5}{c}{\textbf{\tenpt{Dataset}}} \\
    \cmidrule(lr){3-6}  
     & &\multirow{2}{*}{\textbf{MNIST}}&\multirow{2}{*}{\textbf{FMNIST}}&\multirow{2}{*}{\textbf{CIFAR10}} & \multirow{2}{*}{\textbf{CIFAR100}} 
   \\

\noalign{\smallskip\smallskip\smallskip}

\midrule


\multirow{1}{*}{\textsc{ViT-MSP}} &Clean / PGD&{92.4} / 2.9 & {87.6} / 2.0 & \textbf{96.8} / 1.6 &{92.1} / 0.0
\\ 
\midrule

\multirow{1}{*}{$\text{\textsc{AT}}^*$} &Clean / PGD&80.2 / 36.1&72.5 / 29.8&65.2 / 20.6&61.7 / 17.9
\\ 
\midrule
 
\multirow{1}{*}{\textsc{ATOM}} &Clean / PGD&74.8 / 4.1&64.3 / 4.2&68.3 / 5.0&51.4 / 2.6
\\ 
\midrule

\noalign{\smallskip}
\multirow{1}{*}{\textsc{ALOE}} &Clean / PGD&79.5 / 37.3&72.6 / 28.5&52.4 / 25.6&49.8 / 18.2
\\ 
\midrule
\noalign{\smallskip}

\multirow{1}{*}{\textsc{PLP}} &Clean / PGD& 88.3 / 0.4 & 82.6 / 0.0& {94.1} / 0.0& \textbf{92.7} / 3.1
\\ 
\noalign{\smallskip}
\midrule

\multirow{1}{*}{\textsc{ATD}} &Clean / PGD&68.7 / {56.5}&59.6 / {42.1}&49.0 / {32.4}&50.5 / \textbf{36.1}
\\ 
\midrule
\noalign{\smallskip}
\multirow{2}{*}{\textsc{RODEO}} &Clean / PGD&\textbf{97.2} / \textbf{85.0}&\textbf{87.7} / \textbf{65.3}&79.6 / \textbf{62.7}&64.1 / {35.3}
\\ 
\noalign{\smallskip}
\cdashline{2-6}
\noalign{\smallskip}

& AA / A3 & 86.4 / 84.1& 66.8 / 62.9& 63.5 / 63.0& 36.9 / 35.4
\\

\bottomrule
 
\end{tabular}
}
\scriptsize{* indicates the model was trained without using OE.}
\label{Table2.b:OSR}
    \end{minipage}
    \end{subtable}
    \hfill
    \begin{subtable}{.49\textwidth}
    \begin{minipage}[t]{\linewidth}
    \flushright
    \setlength{\tabcolsep}{21.5pt} 
\caption{Comparison of OOD detection performance using CIFAR10 as in-distribution and CIFAR100 as out-of-distribution, and vice versa, under PGD-1000, AutoAttack (AA), AdaptiveAutoAttack (A3), and clean conditions, measured by AUROC(\%).}
\vspace{0.08in} 
\resizebox{\columnwidth}{!}{
\begin{tabular}{ll*{3}{c}} 
\toprule
\noalign{\smallskip}

    \multicolumn{1}{l}{\multirow{3}{*}{\textbf{\tenpt{Method}}}}& \multirow{3}{*}{\textbf{\tenpt{Attack}}} & \multicolumn{2}{c}{\textbf{\tenpt{In-Dataset}}} \\
    \cmidrule(lr){3-4}  
     & &\multirow{2}{*}{\textbf{CIFAR10}}&\multirow{2}{*}{\textbf{CIFAR100}}
   \\

\noalign{\smallskip\smallskip\smallskip}

\midrule

\multirow{1}{*}{\textsc{ViT-MSP}} &Clean / PGD&\textbf{99.5} / 0.0&\textbf{95.1} / 0.0
\\ 

\midrule

\multirow{1}{*}{$\text{\textsc{AT}}^*$} &Clean / PGD&80.5 / 18.9&70.0 / 12.7
\\ 
\midrule
 
\multirow{1}{*}{\textsc{ATOM}} &Clean / PGD&82.7 / 24.4&91.6 / 3.6
\\ 
\midrule

\noalign{\smallskip}
\multirow{1}{*}{\textsc{ALOE}} &Clean / PGD&97.8 / 5.0&79.3 / 24.9
\\ 
\midrule
\noalign{\smallskip}

\multirow{1}{*}{\textsc{PLP}} &Clean / PGD& {98.4} / 0.1 & {93.7} / 0.0
\\ 
\midrule

\multirow{1}{*}{\textsc{ATD}} &Clean / PGD&94.3 / {69.1}&87.7 / {54.8}
\\ 
\midrule
\noalign{\smallskip}
\multirow{2}{*}{\textsc{RODEO}} &Clean / PGD&93.2 / \textbf{69.5}&88.1 / \textbf{64.7}
\\
\noalign{\smallskip}
\cdashline{2-4}
\noalign{\smallskip}
& AA / A3&69.0 / 68.8&65.3 / 63.2
\\

\bottomrule
 
\end{tabular}
}
\scriptsize{* indicates the model was trained without using OE.}
\raggedright

\label{Table2.a:OOD}
    \end{minipage}
    \end{subtable}
\end{table*}

\textbf{Filtering Generated Images.} There is a concern that generated images may still belong to the inlier distribution, which can potentially lead to misleading information in subsequent steps. To mitigate this issue, we have implemented a method that involves defining a threshold to identify and exclude data that falls within the in-distribution. To determine the threshold, we utilize the ImageNet dataset and the CLIP similarity score to quantify the mismatch. We calculate the CLIP score for the synthesized data and its corresponding inlier label. If the computed CLIP score exceeds the threshold, it indicates that the generated data likely belongs to the in-distribution and should be excluded from the outlier set. Assuming the ImageNet dataset includes $M$ classes and each class contains $N$ data samples, let $\mathcal{X} = \{ x_1^1, x_2^1, \ldots, x_N^M \}$ represent the set of all data samples, and let $\mathcal{Y} = \{ y_1, \ldots, y_M \}$ represent the set of all labels, where $x_k^l$ denotes the $k^{\text{th}}$ data sample with label $y_l$. The threshold is then defined as:

{\small

  \begin{align}
\tau_{\text {image}} =   \frac{  \sum_{i=1}^{N}\sum_{j=1}^{M}\sum_{  r=1,r\neq j}^M \mathcal{D}\left(E_I(x_{i}^j) , E_T(y_{r}) \right)}{MN(M-1)}
\end{align}}

\textbf{Model Selection.}\ During our OE generation process, CLIP encoders receive input $x_t$, which is a noisy image. Since the public CLIP model is trained on noise-free images, this discrepancy leads to the generation of low-quality data, as observed in \cite{nichol2021glide}. Consequently, we opt for the smaller CLIP model proposed by \cite{nichol2021glide}, which has been trained on noisy image datasets. It is noteworthy that this model has been trained on 67 million samples and is still well-suited for our pipeline. It can generate outlier samples that has not been exposed to during training. For more details, see Appendix (Sec. \ref{best_fit}).

\subsection{Training \& Test Step}
\textbf{Adversarial Training.} During training, we have access to an inlier dataset $\mathcal{D}^{in}$ consisting of pairs $(x_i, y_i)$ where $y_i \in \{1, ..., K \}$, and we augment it with generated OE $D^{gen}$ consisting of pairs $(x_i, K+1)$ as auxiliary outliers to obtain $\mathcal{D}^{train} = \mathcal{D}^{in} \cup D^{gen}$. We then adversarially train a classifier $f_{\theta}$ with the standard cross-entropy loss $\ell_{\theta}$:$   \quad \min_{\theta} E_{(x,y) \sim \mathcal{D}_{train}} \ \ \max_{\| x^* - x \|_{\infty} \leq \epsilon} \ell_{\theta}(x^*, y)
$
with the minimization and maximization performed respectively by Adam and PGD-10. For evaluation purposes, we utilize a dataset $D^{test}$ that consists of both inlier and outlier samples.

 \textbf{Adversarial Testing.} During test time, we utilize the $(K+1)$-th logit of $f_{\theta}$ as the anomaly score, which corresponds to the class of auxiliary outliers in the training phase. For evaluating our model, along with other methods, we specifically target both inlier and outlier samples with several end-to-end adversarial attacks. Our objective is to cause the detectors to produce erroneous detection results by decreasing anomaly scores for outlier samples and increasing them for inlier samples. We set the value of $\epsilon$ to $\frac{8}{255}$ for low-resolution datasets and $\frac{2}{255}$ for high-resolution ones. For the PGD attack, we use 10 random restarts for the attack, with random initializations within the range of $(-\epsilon, \epsilon)$, and perform $N = 1000$ steps. Furthermore, we select the attack step size as $\alpha=2.5 \times \frac{\epsilon}{N}$.
In addition to the PGD-1000 attack, we have evaluated the models using AutoAttack (AA) \cite{croce2020reliable} and Adaptive AutoAttack (A3) \cite{liu2022practical}. It is important to note that our reported robust performance for some methods differs from their published results. This discrepancy arises because we used a larger epsilon value and targeted all test samples, unlike some methods that focus solely on outliers or inliers. Furthermore, in the Appendix (see Table \hyperref[Table 1.a:Novelty Detection for Low-Resolution_0]{8}), we have evaluated the models under black-box attack \cite{guo2019simple}. Due to limited space, the preliminaries of adversarial attacks, their details, and how we adopt them for targeting the outlier detection method can be found in Appendix \ref{Adversarial_Attack_detail}. Additionally, more information on the evaluation metrics, datasets, and implementation is provided in Appendix~(Sec.~\ref{imple}). The hyperparameters of our method are provided in Tables \ref{tab:adv_training_hyperparams} and \ref{tab:gen_step_hyperparams}.

\begin{table*}[h]
 
\caption{AUROC (\%) of the detector model after adversarial training with outliers generated by different OE techniques in clean data and under PGD-1000 attack evaluation. The results indicate that our adaptive OE method outperforms other methods in terms of improving robust detection. The experiments were conducted in the ND setting (Clean/PGD-1000).}
\label{Table:Ablation_11}
\setlength{\tabcolsep}{15pt} 
\vspace{0.1in}
\resizebox{\linewidth}{!}{
\begin{tabular}{ll*{15}{c}} 
\specialrule{1.5pt}{\aboverulesep}{\belowrulesep}
\noalign{\smallskip}

\multicolumn{1}{l}{\multirow{2}{*}{\textbf{Exposure Technique}}} & \multicolumn{1}{l}{\multirow{2}{*}{\textbf{Measure}}} & \multicolumn{6}{c}{\large{\textbf{Target Dataset}}} & \multicolumn{1}{c}{\multirow{2}{*}{\textbf{Mean}}} \\
\cmidrule{3-8}
& &\small{\textbf{CIFAR10}}&\small{\textbf{MNIST}}&\small{\textbf{FMNIST}}&\small{\textbf{MVTec-AD}}&\small{\textbf{Head-CT}}& \small{\textbf{Covid19}}&  \\
\specialrule{1.5pt}{\aboverulesep}{\belowrulesep}



\multirow{2}{*}{\textsc{Baseline  OE (Gaussian Noise)}} & Clean / PGD & 64.4 / 15.2 & 60.1 / 11.6 & 62.7 / 15.0 & 41.9 / 0.0 & 59.0 / 0.5 & 40.7 / 0.0 & 54.8 / 7.1 \\
& FDC & 0.024 & 0.005 & 0.015 & 0.008 & 0.004 & 0.002 & 0.010 \\
\cmidrule(lr){1-1}
\cmidrule(lr){2-11}

\multirow{2}{*}{\textsc{Baseline  OE (ImageNet)}} & Clean / PGD & {87.3} / {69.3} & {90.0} / 42.8 & {93.0} / {82.0} & \textbf{64.6} / 0.0 & 61.8 / 1.3 & 62.7 / 23.4 & {76.6} / {36.5} \\
& FDC & 3.205 & 0.152 & {1.320} & 0.848 & {1.739} & {1.702} & {1.971} \\
\cmidrule(lr){1-1}
\cmidrule(lr){2-11}

\multirow{2}{*}{\textsc{Mixup with ImageNet}} & Clean / PGD &59.4 / 30.8 & 59.6 / 1.7 & 74.2 / 47.8 & 58.5 / 0.5 & 54.4 / {20.6} & {69.2} / {50.2} & 62.6 / 25.3 \\
& FDC & {3.342} & 0.031 & 0.584 & {0.938} & 1.001 & 0.969 & 1.872 \\
\cmidrule(lr){1-1}
\cmidrule(lr){2-11}

\multirow{2}{*}{\textsc{FITYMI}} & Clean / PGD &29.5 / 15.5 & 76.0 / 51.1 & 52.2 / 30.6 & 43.5 / 7.2 & 63.7 / 6.9 & 42.7 / 12.4 & 56.3 / 20.6 \\
& FDC & 0.683 & 0.016 & 0.064 & 0.457 & 0.028 & 0.029 & 0.249 \\

\cmidrule(lr){1-1}
\cmidrule(lr){2-11}

\multirow{2}{*}{\textsc{GOE}} & Clean / PGD &67.4 / 38.0 & 80.3 / {58.3} & 63.7 / 47.1 & 62.8 / 18.0 & {71.5} / 1.4 & 39.6 / 13.7 & 64.2 /29.4 \\
& FDC & 0.589 & 1.128 & 0.547 & 0.838 & 0.128 & 0.011 & 0.613 \\
\cmidrule(lr){1-1}
\cmidrule(lr){2-11}

\multirow{2}{*}{\textsc{Dream OOD  }} & Clean / PGD & 58.2 / 24.7 & 80.5 / 51.4 & 66.8 / 45.9 & 55.0 / {12.7} & 69.9 / 1.2 & 44.1 / 0.1 & 62.4 / 22.7 \\
& FDC & 0.619 & {1.210} & 0.515 & 0.760 & 0.280 & 0.055 & 0.573 \\
\cmidrule(lr){1-1}
\cmidrule(lr){2-11}

\multirow{2}{*}{\textbf{\textsc{Adaptive OE (Ours)}}} & Clean / PGD &\textbf{87.4} / \textbf{70.2} & \textbf{99.4} / \textbf{94.6} & \textbf{95.6} / \textbf{87.2} & {61.5} / \textbf{14.9} & \textbf{87.3} / \textbf{68.6} & \textbf{79.6} / \textbf{58.3} & \textbf{85.1} / \textbf{65.6} \\
& FDC & \textbf{3.674} & \textbf{3.902} & \textbf{3.046} & \textbf{1.160} & \textbf{3.476} & \textbf{3.059} & \textbf{3.325} \\
\specialrule{1.5pt}{\aboverulesep}{\belowrulesep}

\end{tabular}}

\end{table*}

\section{Experimental Results}
In this section, we conduct comprehensive experiments to evaluate previous outlier detection methods, including both standard and adversarially trained approaches, as well as our own method, under clean and various adversarial attack scenarios. Our experiments are organized into three categories and are presented in Tables \ref{Table1:Novelty Detection}, \ref{Table2.b:OSR}, and \ref{Table2.a:OOD}. We provide a brief overview of each category below, and further details can be found in the Appendix (sec. \ref{osr_app}, \ref{ood_app}).

\textbf{Novelty Detection.}  For each dataset containing $N$ distinct classes, we perform $N$ separate experiments and average the results. In every individual experiment, samples from a single class are used as the inlier data, while instances from the remaining $N-1$ classes are considered outliers. Table \ref{Table1:Novelty Detection} presents the results of this setup. 

\textbf{Open-Set Recognition.}
For this task, each dataset was randomly split into an inlier set and an outlier set at a 60/40 ratio. This random splitting was repeated 5 times. The model was exclusively trained on the inlier set samples, with average AUROC scores reported. Results are presented in Table~\ref{Table2.b:OSR}.
 
 \textbf{Out-Of-Distribution Detection.}
For the task of OOD detection, we considered CIFAR10 and CIFAR100 datasets as inlier datasets in separate experiments. Following earlier works \cite{chen2020robust}, and \cite{hendrycks2018deep}, we test the model against several outlier datasets which are semantically distinct from the in-distribution datasets, including MNIST, TinyImageNet \cite{imagenet}, Places365 \cite{places}, LSUN \cite{LSUN}, iSUN \cite{iSUN}, Birds \cite{birds}, Flowers 
 \cite{flowers}, COIL-100 \cite{coil} and CIFAR10/CIFAR100  (depending on which is considered the inlier dataset). For any given in-distribution dataset, the results are averaged over the OOD datasets. We have provided the results of this task in Table~\ref{Table2.a:OOD}.

As the results indicate, RODEO demonstrates significant performance in robust outlier detection, outperforming others by a large margin under various strong attacks. Notably, in open-world applications where robustness is crucial, a slight decrease in clean performance is an acceptable trade-off for enhanced robustness. Our results support this stance, where in ND setup, achieving an average of 83.7\% in clean setting and 63.5\% in adversarial scenario across various datasets. RODEO surpasses recent methods in clean detection, such as EXOE, which utilizes pretrained CLIP. While achieving 86.5\% in clean settings, it experiences a substantial drop to 0.3\% in adversarial settings. RODEO also shows superiority in OSR and OOD detection setups. More experiments, including performance in non-adversarial training setups, can be found in Appendix \ref{Table_NON_AT}.

\section{Ablation Study} \label{Ablation_Section}
In this section, we quantitatively evaluate our adaptive OE method pipeline in comparison to alternative OE methods, as presented in Fig. \ref{fig:Method_Plot}. Our experiment is divided into two categories. In the first category, we substituted our adaptive OE method with an alternative one and reported the performance of our outlier detection method. In the second category, we compared OE methods by assessing their exposed auxiliary data, using the Fréchet Inception Distance (FID) \cite{DBLP:journals/corr/HeuselRUNKH17} and Density \& Convergence \cite{naeem2020reliable} metrics. FID measures the distance between two sets of image distributions, with higher values indicating greater distance. The Density \& Convergence metric measures diversity, where higher values indicate more diversity. For a unified comparison, we define 
\begin{equation}
   \text{FDC}=\log(1 + \frac{\text{Density} \times \text{Coverage}}{\text{FID}} \times 10^4) 
\end{equation}using the logarithm to scale values and adding one to ensure inputs to the logarithm are greater than zero. A higher FDC value indicates that the generated outliers have more diversity and are closer to the in-distribution. The results of the experiments are presented in Table \ref{Table:Ablation_11}. For more details about FDC, please refer to Sec. \ref{DENSITY}. Additionally, more comprehensive results can be found in the Appendix \ref{Table:Ablation_21}.

\textbf{Analyzing Ablation Study.} \label{limit_diff}
Ablation study results indicate that alternative OE methods underperform compared to RODEO in enhancing robust outlier detection, closeness to in-distribution, and diversity. RODEO's superiority stems from its pixel-based generation process that shifts inliers to outliers. This process, leveraging entire ID information from both text and image domains, was overlooked by methods like GOE. Meanwhile, Dream-OOD, despite leveraging both text and image information and being trained on a significantly larger dataset of 5 billion data points compared to RODEO's 67 million, underperforms due to its methodology of generating images in the embedding space. This approach is less suited for synthesizing pixel-level outliers and often leads to the generation of samples with different styles, i.e., far outliers, attributed to the bias of its backbone trained on LAION~\cite{schuhmann2022laion, naik2023social}. We provide a visual representation in the Appendix to showcase our method's capability to generate outliers specifically for unseen image domains, compared to alternatives.

\section{Conclusion}

In conclusion, our work introduces RODEO, a novel and effective approach for enhancing the robustness of outlier detection methods against adversarial attacks. By proposing a novel data-centric approach, we strategically craft informative OOD samples, allowing RODEO to achieve superior detection performance under both clean and adversarial evaluation conditions. We verify RODEO through comprehensive ablation experiments on its various components. Moreover, our extensive experiments across real-world datasets, as well as under various strong attacks, confirm our method's effectiveness, setting a new benchmark for future research in reliable outlier detection.

\section{Limitations}
\addcontentsline{toc}{section}{Limitations}

This study aims to enhance the adversarial detection performance of outlier detection tasks. Despite significant advancements in adversarial detection, our clean performance still lags behind existing state-of-the-art methods. The trade-off between clean and adversarial test performance is well-documented in the literature \cite{zhang2019theoretically,madry2017towards,schmidt2018adversarially}. Our work is also subject to these trade-offs. However, we have also provided results for scenarios where standard training is performed instead of adversarial training, which leads to increased clean performance.
\clearpage

\section*{Impact Statement}
\addcontentsline{toc}{section}{Impact Statement}
This paper aims to enhance the field of Machine Learning by proposing an adversarially robust method for outlier detection. There are many potential societal consequences of our work, none of which we feel must be specifically highlighted here.

\section*{Acknowledgements}
\addcontentsline{toc}{section}{Acknowledgements}
We thank Mohammad Sabokrou, Mohammadreza Salehi, Mojtaba Nafez, and the anonymous reviewers for their helpful discussions and feedback on this work.

\nocite{langley00}

\bibliography{RODEO}
\bibliographystyle{icml2024}

\newpage
\appendix
\onecolumn
\newpage

\section{Evaluation Metrics \& Datasets \& Implementation Details} \label{imple}

\textbf{Evaluation Metrics} AUROC is used as a well-known classification criterion. The AUROC value is in the range [0, 1], and the closer it is to 1, the better the classifier performance.

\textbf{Datasets} For the low-resolution datasets, we included CIFAR10 \cite{krizhevsky2009learning}, CIFAR100 \cite{krizhevsky2009learning}, MNIST~\cite{lecun-mnisthandwrittendigit-2010}, and FashionMNIST~\cite{xiao2017fashionmnist}. Furthermore, we performed experiments on medical and industrial high-resolution datasets, namely Head-CT \cite{felipe-campos-kitamura_2018}, MVTec-ad \cite{bergmann2019mvtec}, Brain-MRI \cite{brainmri}, Covid19 \cite{cohen2020covid}, and Tumor Detection \cite{msoud-nickparvar_2021}. The results are available in Table \hyperref[Table 1.a:Novelty Detection for Low-Resolutionn]{1}.

 \textbf{Implementation Details} We use ResNet-18\cite{resnet} as the architecture of our neural network for the high-resolution datasets and for the low-resolution datasets, we used Wide ResNet\cite{zagoruyko2017wide}. Furthermore, RODEO is trained 100 epochs with Adam\cite{kingma2017adam} optimizer with a learning rate of 0.001 for each experiment.

\section{Algorithm}
This algorithm presents the complete approach, including all components that are integral to it.

\algrenewcommand\algorithmicrequire{\textbf{Input:}}
\algrenewcommand\algorithmicensure{\textbf{Output:}}

\begin{algorithm}
\caption{RODEO: Adversarial Training with Adaptive Exposure Dataset}
\label{alg:RODEO}

\begin{algorithmic}[1] 
\Require $\mathcal{D}_{\text{in}}$, $\mathcal{D}_{\text{val}}$, $\tau_{\text{image-val}}$, $\tau_{\text{label-val}}$, $enc_{text}$, $\mu_{Diffusion}$, $\Sigma_{Diffusion}$, $E_{I}^{CLIP}$, $E_{T}^{CLIP}$, $f_\theta$, $K$, $T_0$, $T$ \Comment{$T_0 \in [0, 0.6T]$}
\Ensure $\hat{f}_{\theta}$

\State \textbf{Near-Distribution outlier Prompt Search}
\State $\tau_{\text{label-val}} = Avg(Dist(E_{T}^{CLIP}(y_i), E_{T}^{CLIP}(y_j))) \hfill \forall (y_i,y_j) \in \mathcal{Y}(\mathcal{D}_{\text{val}})$
\For{$(i, label) \in Y$}
    \State $Prompts[i] \leftarrow enc_{text}.KNN(label)$
    \State $Prompts[i] \leftarrow Prompts[i].Remove(enc_{text}.MinDist(Prompt, Y \backslash label) < \tau_{\text{label-val}})$
    \State $Prompts[i] \leftarrow Prompts[i] \cup Append(NegativeAdjectives[label], label)$
\EndFor

\State \textbf{Adaptive Exposure Generation}
\State $\tau_{\text{image-val}} = Avg(Dist(E_{I}^{CLIP}(x_i), E_{T}^{CLIP}(y))) \hfill \forall (x_i,y_i) \in \mathcal{D}_{\text{val}} \forall y \neq y_i$
\For{$(x_i, y_i) \in \mathcal{D}_{\text{in}}$}
    \State $c \sim \mathcal{U}(Prompts[y_i])$
    \State $t_{init} \sim \mathcal{U}([T_0, \ldots, T])$
    \State $\hat{x}_{t_{init}} = x_i$
    \For{$t = t_{init}, \ldots, 0$}
        \State $\hat{\mu}(\hat{x}_t | c) = \mu_{Diffusion}(\hat{x}_t|c) + s \cdot \Sigma_{Diffusion}(\hat{x}_t|c) \cdot \nabla_{\hat{x}_t} (E_{I}^{CLIP}(\hat{x}_t) \cdot E_{T}^{CLIP}(c))$
        \State $\hat{x}_{t-1} \sim \mathcal{N}(\hat{\mu}(\hat{x}_t | c), \Sigma_{Diffusion}(\hat{x}_t|c))$
    \EndFor
    \If{$Dist(E_{I}^{CLIP}(\hat{x}_0), E_{T}^{CLIP}(y_i)) < \tau_{\text{image-val}}$}
        \State $\mathcal{D}_{\text{exposure}} \leftarrow \mathcal{D}_{\text{exposure}} \cup \{(\hat{x}_0, K+1)\}$
    \EndIf
\EndFor

\State $\mathcal{D}_{\text{train}} \leftarrow \mathcal{D}_{\text{in}} \cup \mathcal{D}_{\text{exposure}}$
\State $\hat{f}_{\theta} \leftarrow \text{Adversarial-Training}(f_{\theta}, \mathcal{D}_{\text{train}})$
\end{algorithmic}
\end{algorithm}

\begin{figure}[h]
  \begin{center}
    \includegraphics[width=1\linewidth]{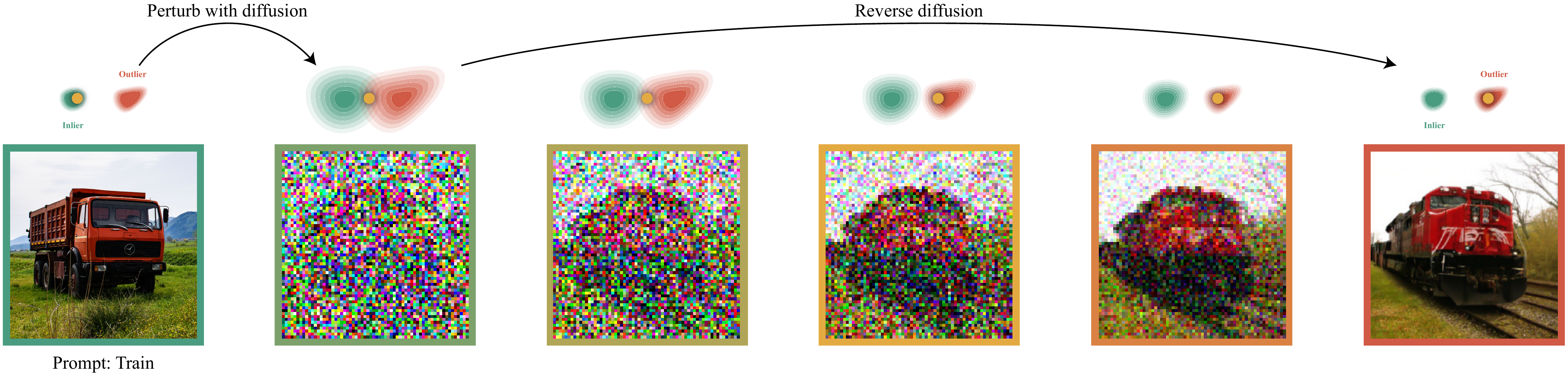}
    \caption{The figure illustrates a text-guided diffusion process. A yellow dot, representing an inlier data point within the green inlier distribution, is progressively transformed towards the red outlier distribution, driven by CLIP guidance. This showcases the model's ability to guide the transformation from inlier to outlier data via textual instructions.}
    \label{fig:Diff_Plot}
  \end{center}
\end{figure}

\section{ND, OSR and outlier Detection} \ 
  
As we have reported the results of our method on the most common settings for outlier detection, in this section, we provide a brief explanation for each setting to provide further clarification.
  In OSR, a model is trained on $K$ classes from an $N$-class training dataset. During testing, the model encounters $N$ distinct classes, where $N-K$ of these classes were not present during the training phase. ND is a type of open-set recognition that is considered an extreme case, specifically when $k$ is equal to 1.
  Some works refer to ND as one-class classification. outlier detection shares similarities with OSR; however, the key distinction is that the open-set and closed-set classes originate from two separate datasets. \cite{yang2021generalized,mirzaei2022fake}
  
\section{Detailed Baselines} \label{detailed_base}

Some works introduced the OE technique for outlier detection tasks, which utilizes auxiliary random images known to be anomalous \cite{hendrycks2018deep}. Many top-performing outlier detection methods incorporate OE to enhance their performance in both classic and adversarial outlier detection evaluation tasks \cite{kong2021opengan,liznerski2022exposing,mirzaei2022fake}. The most direct approach to utilizing outliers involves incorporating them into the training set, with labels uniformly selected from the label space of typical samples. In an effort to improve the adversarial robustness of outlier detection, some methods have attempted to make OE more adaptive. For example, ATD \cite{azizmalayeri2022your} employs a generator to craft fake features instead of images. Another approach, ALOE \cite{chen2021atom}, mines low anomaly score data from an auxiliary  outlier dataset for training, thereby enhancing the robustness of outlier detection.

 Furthermore, some other works have pursued outlier detection in an adversarial setting which includes APAE \cite{goodge2021robustness}, PrincipaLS \cite{lo2022adversarially} and OCSDF \cite{bethune2023robust} and OSAD \cite{shao2022open} ATOM\cite{chen2021atom} ALOE \cite{chen2020robust} and ATD \cite{azizmalayeri2022your}, between these robust outlier detection methods, ATOM, ALOE and ATD achieved relatively better results by incorporating OE and adversarial training, however, their performance falls short in case that inlier set distribution is far from their fixed OE set. For instance, APAE \cite{goodge2021robustness} suggested enhancing adversarial robustness through the utilization of approximate projection and feature weighting. PrincipaLS \cite{lo2022adversarially} proposed Principal Latent Space as a defense strategy to perform adversarially robust ND. OCSDF \cite{bethune2023robust} aimed to achieve robustness in One-Class Classification (OCC) by learning a signed distance function to the boundary of the support of the inlier distribution, which can be interpreted as the inlierity score. Through making the distance function $\ell_1$ Lipschitz, one could guarantee robustness against $\ell_2$ bounded perturbations. OSAD \cite{shao2022open} augmented the model architecture with dual-attentive denoising layers, and integrated the adversarial training loss with an auto-encoder loss. The auto-encoder loss was designed to reconstruct the original image from its adversarial counterpart. 

 In the context of adversarial outlier scenarios, certain studies focused on utilizing the insights gained from the pre-trained models based on Vision Transformers (ViT) \cite{NEURIPS2021_3941c435,dosovitskiy2020image}. Some other works incorporated OE to enhance their performance in both clean and adversarial outlier detection evaluation tasks \cite{chen2021atom, kong2021opengan,liznerski2022exposing,mirzaei2022fake}. The most direct approach to utilizing outliers involved incorporating them into the training set, with labels uniformly selected from the label space of typical samples. In an effort to improve the adversarial robustness of  the detection models, some methods have attempted to make OE more adaptive. For example, ATD \cite{azizmalayeri2022your} employed a generator to craft fake features instead of images, and applied adversarial training on OE and inlier real samples to make the discriminator robust. Another approach, ATOM \cite{chen2021atom}, mined low anomaly score data from an auxiliary outlier dataset for training, thereby enhancing the robustness of outlier detection through adversarial training on the mined samples \cite{mirzaei2022fake,mirzaei2024universal,salehi2021unified,mirzaei2024adversarially,mirzaeiscanning,moakhar2023seeking,mirzaei2024killing,jafari2024power,taghavi2023change,rahimi-etal-2024-hallusafe,taghavi2023imaginations,taghavi-etal-2023-ebhaam,taghavi2024backdooring,ebrahimi2024sharifa,ebrahimi2024sharif,rahimi-etal-2024-nimz}.

  \section{Details About Evaluation and Generation}
  
\subsection{OSR Experiments Details} \label{osr_app}
In order to evaluate earlier works in OSR setting, we first select desired number of classes, say $K$ and rename the labels of samples to be in the range $0$ to $K-1$. Then following the guideline of the method, we evaluate it in both clean and adversarial settings and repeat each experiment 5 times and report the average.

\subsection{OOD Experiments Details} \label{ood_app}
Table \hyperref[Table2:a:OOD]{2} yielded results that are now presented in Table \hyperref[OOD_details]{7} for a more comprehensive overview. We designated multiple datasets as out-of-distribution during the testing phase and reported the outcomes in Table \hyperref[OOD_details]{7}. Adversarial and clean out-of-distribution scenarios have also been examined by other approaches. Prominent methods in the clean setting encompass the ViT architecture and OpenGAN. Regarding image classification, AT and HAT have been recognized as highly effective defenses. AOE, ALOE, and OSAD are regarded as state-of-the-art methods for out-of-distribution detection, and ATD in robust outlier detection. These outlier methods (excluding OpenGAN and ATD) have undergone evaluation with various detection techniques, including MSP \cite{hendrycks2017a}\cite{liang2018enhancing}, MD \cite{MD}, Relative MD \cite{ren2021simple}, and OpenMax \cite{openmax}. The results reported for each outlier method correspond to the best-performing detection method. Notably, our approach has surpassed the state-of-the-art in robust out-of-distribution setting (ATD) for nearly all datasets.
\begin{equation}
\mu_k=\frac{1}{N} \sum_{i: y_i=k} z_i, \quad \Sigma=\frac{1}{N} \sum_{k=1}^K \sum_{i: y_i=k}\left(z_i-\mu_k\right)\left(z_i-\mu_k\right)^T, \quad k=1,2, \ldots, K
\end{equation}
In addition, to use RMD, one has to fit a $\mathcal{N}\left(\mu_0, \Sigma_0\right)$ to the whole in-distribution. Next, the distances and anomaly score for the input $x^{\prime}$ with pre-logits $z^{\prime}$ are computed as:
\begin{equation}
\begin{gathered}
M D_k\left(z^{\prime}\right)=\left(z^{\prime}-\mu_k\right)^T \Sigma^{-1}\left(z^{\prime}-\mu_k\right), \quad R M D_k\left(z^{\prime}\right)=M D_k\left(z^{\prime}\right)-M D_0\left(z^{\prime}\right), \\
\operatorname{score}_{M D}\left(x^{\prime}\right)=-\min _k\left\{M D_k\left(z^{\prime}\right)\right\}, \quad \text { score }_{R M D}\left(x^{\prime}\right)=-\min _k\left\{R M D_k\left(z^{\prime}\right)\right\} .
\end{gathered}
\end{equation}

In this section, we will provide more details about our evaluation methodology and Generation Step.
\subsection {Generation Step} \label{gen_step_app}
 
Denoising Diffusion Probabilistic Models (DDPMs) \cite{sohl2015deep,ho2020denoising} are trained to reverse a parameterized Markovian process that transforms an image to pure noise gradually over time. Beginning with isotropic Gaussian noise samples, they iteratively denoise the image and finally convert it into an image from the training distribution. In particular a network employed and trained as follows: $p_\theta(x_{t-1} | x_t) = \mathcal{N}(  \mu_\theta(x_t, t), \Sigma_\theta(x_t, t))$. This network takes the noisy image $x_t$ and the embedding at time step $t$ as input and learns to predict the mean $\mu_\theta(x_t, t)$ and the covariance $\Sigma_\theta(x_t, t)$. Recent studies have shown that DDPMs can be utilized for tasks such as generating high-quality images, as well as for editing and inpainting \cite{dhariwal2021diffusion, avrahami2022blended, croitoru2023diffusion}.
 
In our proposed generation method, we perturb the in-distribution(ID) images with Gaussian noise and utilize a diffusion model with guidance from the extracted candidate near-outlier labels to shift the ID data to outlier data. This is possible because it has been shown that the reverse process can be solved not only from $t_0=1$ but also from any intermediate time (0, 1). We randomly choose an initial step for each data between 0 and 0.6, which is a common choice based on previous related works. \cite{kim2022diffusionclip,meng2021sdedit}

If we have k classes in the inlier dataset, with each class containing N samples, we generate N outlier samples to extend the dataset to k+1 classes. However, if N is a small number (e.g. N<100), we may generate up to 3000 outlier samples to prevent overfitting.

\subsection {Adversarial Attack on outlier Detectors} \label{Adversarial_Attack_detail}

\textbf{Adversarial attacks.} \ For the input $x$ with an associated ground-truth label $y$, an adversarial example $x^*$ is generated by adding a small noise to $x$, maximizing the predictor model loss $\ell(x^*; y)$. Projected Gradient Descent (PGD) \cite{madry2017towards} method is regarded as a standard and effective attack technique that functions by iteratively maximizing the loss function, through updating the perturbed input by a step size $\alpha$ in the direction of the gradient sign of $\ell(x^*; y)$ with respect to $x$:$\quad   x_0^*=x, \quad x_{t+1}^*=x_t^*+\alpha . \operatorname{sign}\left(\nabla_x \ell\left(x_t^*, y\right)\right),$ where the noise is  projected onto the $\ell_{\infty}$-ball  with a radius of $\epsilon$ during each step.  To adapt adversarial attacks for the outlier detection task, we target the final output, which is the outlier score of each test sample. This study utilized attacks with objectives that include both inlier and outlier samples.

\textbf{Attack to outlier detectors.} 
\ Outlier detection can be formulated as:
$$g_\lambda(\mathrm{x})= \begin{cases}\mathrm{ID} & \text { if } O(\mathrm{x}) \leq \lambda \\ \mathrm{OOD} & \text { if } O(\mathrm{x})>\lambda\end{cases},$$
where $O(x)$ is the detection score, and $\lambda$ is the threshold. In PGD attacks to outlier detection methods, instead of maximizing the loss value, we try to increase $O(x)$ if $x$ belongs to in-distribution samples and decrease it otherwise. The formulation of the attack would be:
$x_0^*=x, x^{t+1}=x^t+\alpha \cdot \operatorname{sgn}\left(\nabla_x\left(y \cdot O_\theta\left(x^t\right)\right)\right)$ Where $y=1$ for in-distribution samples and $y=-1$ for outlier samples. The same setting holds for other attacks in our study including AA, A3, and Blackbox.

  We performed various strong attacks including PGD-1000 with 10 random restarts, Auto Attack (AA) \cite{croce2020reliable}, and Adaptive Auto Attack (A3) \cite{liu2022practical}. The latter is a recently introduced attack that has demonstrated considerable strength. It adapts the attack according to the test dataset and the model to better use the adversarial budget, i.e. the number of iteration steps. It also uses a wiser method for the initialization of the attack to save the adversarial budget for perturbing more samples from the test dataset. The detailed experiments on these attacks are reported in Tables \ref{Detailed_one_class}, \ref{OOD_details}, and \ref{Table 1:Novelty Detection_2}. It is also noteworthy that for Auto Attack, it was not possible to adapt the DLR\cite{croce2020reliable} loss-based attacks due to their presumption that the output of the model has at least 3 elements, which does not hold in outlier detection tasks.
  \subsection{Computational Cost}
Experiments were conducted on RTX 3090 GPUs. Generating approximately 10,000 low-resolution and 1,000 high-resolution outlier data required 1 hour. For the one-class anomaly detection, training each class of low-resolution datasets took about 100 minutes (see Figure \ref{fig:computation_plot} for detailed analysis). The outlier detection task required around 16 hours of training, and each experiment in the OSR setting took approximately 9 hours.
 
\begin{figure}[h]
  \begin{center}
    \includegraphics[width=1\linewidth]{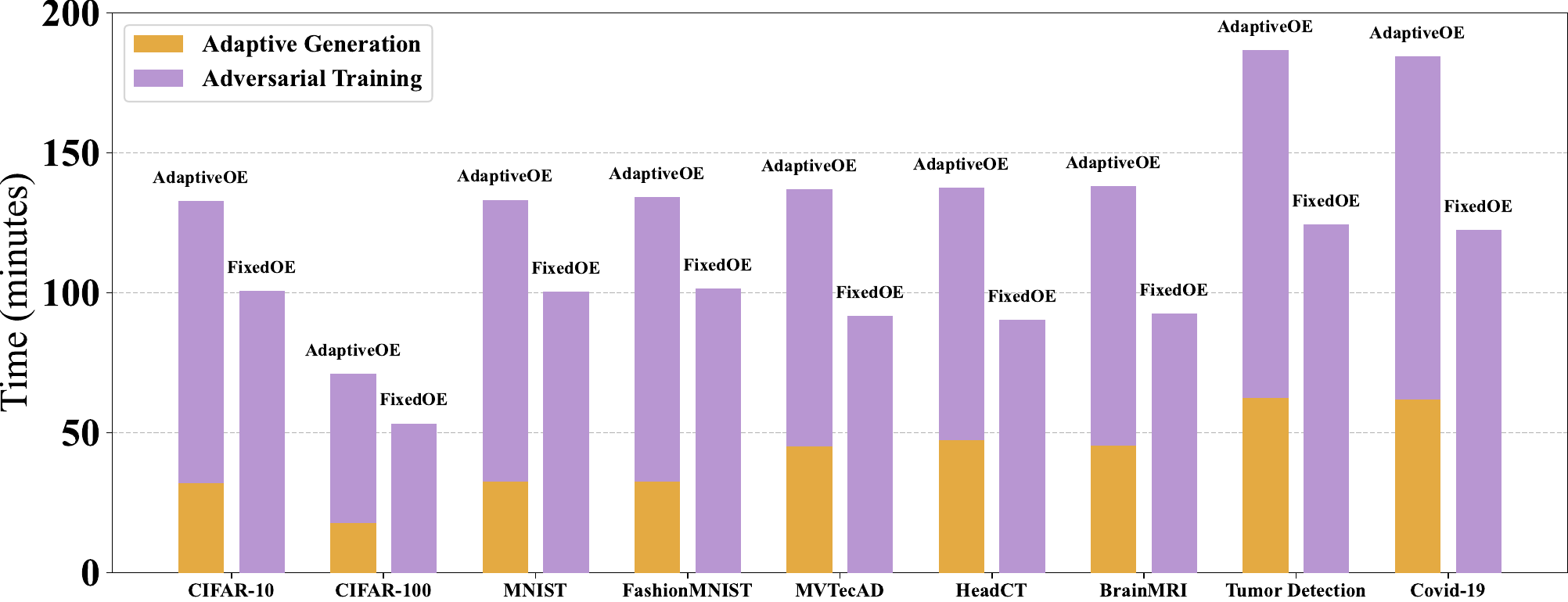}
    \caption{Comparative analysis of computational time for data generation and adversarial training across various datasets in one-class anomaly detection setting. The time is measured in minutes and is split into two components: data generation (golden segment) and the subsequent adversarial training phase (purple segment). The datasets range from standard image benchmarks like CIFAR-10 and MNIST to specialized medical and anomaly detection datasets such as MVTecAD, BrainMRI, and Covid-19.}
    \label{fig:computation_plot}
  \end{center}
\end{figure}

\section{Using Image Labels as Descriptors}

Novelty detection, also known as one-class classification, involves identifying instances that do not fit the pattern of a given class in a training set. Traditionally, methods for this task have been proposed without using the labels of the training data. For example, they did not take into account the fact that the inlier set includes the semantic "\texttt{dog}". In the case of outlier detection (which is a multi-class setting), methods commonly extract features and define supervised algorithms using the labels of the inlier set. However, they do not fully utilize the semantic information contained in these labels. Specifically, they only consider class labels as indexes for their defined task, such as classification.

Recently, there has been a growing interest in leveraging pre-trained multimodal models to enhance outlier detection performance, both in one-class and multi-class scenarios. Unlike prior works, these approaches utilize the semantic information embedded within the labels of the inlier set. This is akin to treating labels as image descriptors rather than just as indices. For example, \cite{liznerski2022exposing} used CLIP in the novelty detection setting and utilized both pairs of inlier images and their labels (e.g., a photo of {x}) to extract maximum information from the inlier set. Similarly, \cite{esmaeilpour2022zero} applied CLIP for zero-shot outlier detection and used both the image and semantic content of their respective labels to achieve the same goal. Motivated by these works, our study utilizes image labels as descriptors in all reported settings (ND, OSR, OOD). In fact, we utilized a simple language model to predict candidate unseen labels for outlier classes located near the boundary, leveraging these image labels. \paragraph{Discussion} Although some recent works have used labels as descriptors, there may be concerns that this approach could provide unfair guidance since it is not commonly used in traditional literature. However, it is important to note that the outlier detection problem is a line of research with many practical applications in industries such as medicine autonomous driving cars and industry. In such cases, knowing the training data labels and semantics, such as \textit{"healthy CT scan images"}, is possible and we do not need more details about inlier data classes except for their names.

 Moreover, previous adversarially robust outlier detector models have reported almost no improvement over random results in real-world datasets, especially in the case of ND settings. Therefore, our use of the inlier class label as an alternative solution is reasonable. Our approach outperforms previous models by up to 50\% in the robust ND scenario and this superiority continues in multi-class modes where data labels are available and we only use the class names to improve the model. Given the applicability of the task addressed in this article and the progress of multi-domain models, our approach has potential for practical use

  \section{Leveraging Pre-trained Models for outlier Detection}
It has been demonstrated that leveraging pre-trained models can significantly improve the performance and practical applicability of downstream tasks \cite{wei2021pretrained,wang2022bridging}, including outlier detection, which has been extensively studied. 

Various works \cite{bergman2020deep,reiss2021panda,reiss2021mean,cohen2021transformaly,roth2021total,Salehi_2021_CVPR} have utilized pre-trained models' features or transfer learning techniques to improve detection results and efficiency, particularly in outlier detection under harder constraints. For example, \cite{esmaeilpour2022zero} used a pre-trained CLIP model trained on 400 million data for Zero-Shot outlier Detection, \cite{fort2021exploring} proposed using a pre-trained ViT \cite{dosovitskiy2020image} model trained on 22 million data for near-distribution outlier detection, and \cite{xu2021unsupervised} utilized a pre-trained BERT \cite{devlin2018bert} model trained on billions of data for outlier detection in the text domain. In our work, we addressed the highly challenging task of developing an adversarially robust outlier detector model, which is unexplored for real-world datasets such as medical datasets. To accomplish this, we utilized the CLIP and diffusion model as our generator backbone, which was trained on 67 million data.

\section{Why Our Diffusion Model Is the Best Fit for Near-outlier Generation} \label{best_fit}

\paragraph{Working in Pixel Space}
In Section \ref{two_type_ood}, we discussed how outlier data can be divided into two categories (i.e. pixel- and semantic-level). Our need for diverse outlier data motivates our preference for generative models that can create both pixel-level and semantic-level outlier data. Our generative model is a suitable choice as it uses a diffusion model applied at the pixel-level to generate images from texts. This allows the model to generate outlier samples that differ in their local appearance, which is particularly important for pixel-level outlier detection. 
Compared to other SOTA text-to-image models that mostly work at the embedding level, our generative model's ability to generate images at the pixel-level makes it a better choice for our purposes.
\paragraph{Comparing with DreamBooth}
As our pipeline's generator model involves image editing, we explored the literature on image manipulating and tested a common methods used for image editing. Numerous algorithms have been proposed for generating new images conditioned on input images  among these, we have chosen DreamBooth as one of the SOTA algorithms for specifying image details in text-to-image models.  we evaluated the DreamBooth \cite{ruiz2023dreambooth} algorithm for changing image details in various datasets. Our experiment showed that, despite DreamBooth's good performance for natural images and human faces, the algorithm had poor results for datasets with different distributions, such as MNIST and FashionMNIST. One possible explanation for the poor performance of these algorithms is their bias towards the distribution of the training datasets, such as LAION, which typically consists of natural images and portraits. Consequently, these algorithms may not yield satisfactory results for datasets with different distributions.
\begin{figure}[thb]
  \begin{center}
    \includegraphics[width=0.5\linewidth]{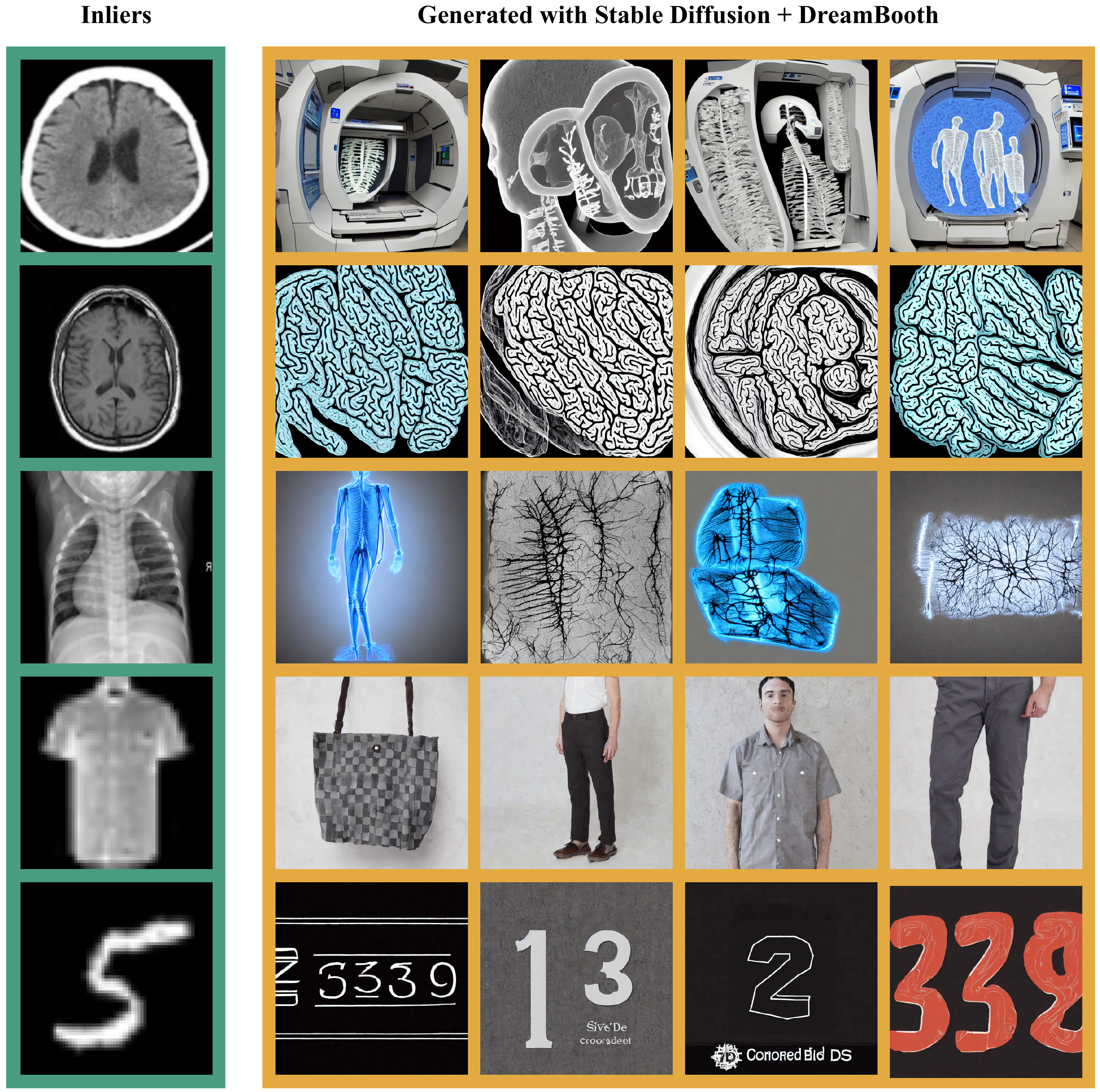}
    \caption{  generated images using the DreamBooth algorithm and StableDiffusion model, which shows a very large shift between ID- and generated OOD data. This demonstrates the superiority of our pipeline as a near OOD generator.}
    \label{fig:Stable_Diffusion_Plot}
  \end{center}
\end{figure}

\section{Detailed Analysis and Insights of Ablation Study}

\paragraph{FID, Density and Coverage}\label{DENSITY}
Fréchet Inception Distance (FID) metric \cite{DBLP:journals/corr/HeuselRUNKH17} measures the distance between feature vectors of real and generated images and calculates the distance them, has been shown to match with human judgments.
{\color{black} The diversity of generative models can be evaluated using two metrics: Density and Coverage \cite{naeem2020reliable}. By utilizing a manifold estimation procedure code, the distance between two sets of images can be measured. To calculate these metrics, features from a pre-trained model are utilized, specifically those before the final classification layer. The mathematical expression for these metrics is as follows}:
\begin{equation} \label{eq_}
    \centering
    \begin{split}
Density\left(\boldsymbol{X}_s, \boldsymbol{X}_t, F, k\right)=\frac{1}{k M} \sum_{j=1}^M \sum_{i=1}^N \mathbb{I}\left(\boldsymbol{f}_{t, j} \in B\left(\boldsymbol{f}_{s, i}, \mathrm{NN}_k\left(F\left(\boldsymbol{X}_s\right), \boldsymbol{f}_{s, i}, k\right)\right)\right) \text {, }
    \end{split}
\end{equation}
\begin{equation} \label{eq_}
    \centering
    \begin{split}
Coverage\left(\boldsymbol{X}_s, \boldsymbol{X}_t, F, k\right)=\frac{1}{N} \sum_{i=1}^N \mathbb{I}\left(\exists j \text { s.t. } \boldsymbol{f}_{t, j} \in B\left(\boldsymbol{f}_{s, i}, \mathrm{NN}_k\left(F\left(\boldsymbol{X}_s\right), \boldsymbol{f}_{s, i}, k\right)\right)\right).
    \end{split}
\end{equation}
 
where $\boldsymbol{F}$ is a feature extractor, $\boldsymbol{f}$ is a collection of features from ${F}$, 

 $\boldsymbol{X}_s=\left\{\boldsymbol{x}_{s, 1}, \ldots, \boldsymbol{x}_{s, N}\right\}$ denotes real images, $\boldsymbol{X}_t=\left\{\boldsymbol{x}_{t, 1}, \ldots, \boldsymbol{x}_{t, M}\right\}$ denotes  generated images, $B(\boldsymbol{f}, r)$ is the $\mathrm{n}$-dimensional sphere in which $\boldsymbol{f}=F(\boldsymbol{x}) \in \mathbb{R}^n$ is the center and $r$ is the radius, $\mathrm{NN}_kf({F}, \boldsymbol{f}, k)$ is the distance from $\boldsymbol{f}$ to the $k$-th nearest embedding in ${F}$, and $\mathbb{I}(\cdot)$ is a indicator function. We use the standard InceptionV3 features, which are also used to compute the FID. The measures are computed using the official code \cite{naeem2020reliable}.\\
 The definition of the FDC metric introduced in the paper is as below:
\begin{equation}
FDC=\log(1 + \frac{Density \times Coverage}{FID} \times 10^4)
\end{equation}

\section{The Significance of Conditioning on Both Images and Text from the inlier Distribution}

In order to have an accurate outlier detector, it's important to generate diverse and realistic samples that are close to the distribution of the inlier data. In our study, we tackle this challenge by leveraging the information contained in the inlier data. Specifically, we extract the labels of classes that are close to the inlier set and use them as guidance for generation. Additionally, we initialize the reverse process generation of a diffusion model with inlier images, so the generation of outlier data in our pipeline is conditioned on both the images and the text of the inlier set. This enables us to generate adaptive outlier samples.

In the Ablation Study (sec. \ref{Ablation_Section}), we demonstrate the importance of using both image and text information for generating outlier data. We compare our approach with two other methods that condition on only one type of information and ignore the other. The first technique generates fake images based on the inlier set, while the other generates outlier data using only the extracted text from inlier labels. The results show that both techniques are less effective than our adaptive exposure technique, which conditions the generation process on both text and image. This confirms that using both sources of information is mandatory and highly beneficial. 

\subsection{Samples Generated Solely Based on Text Conditioning} 
  In this section, we compare inlier images with images generated by our pipline using only text  in Fig. \ref{fig:Samples_Plot1}  (without conditioning on the images). Our results, illustrated by the plotted samples, demonstrate that there is a significant difference in distribution between these generated images and inlier images. This difference is likely the reason for the ineffectiveness of the outlier samples generated with this technique.
\section{Label Generation}\label{prompt_gen}
  \subsection{Pixel-Level and Semantic-Level outlier Detection}\label{two_type_ood}
OOD samples can be categorized into two types: pixel-level and semantic-level. In pixel-level outlier detection, ID and outlier samples differ in their local appearance, while remaining semantically identical. For instance, a broken glass could be considered an outlier sample compared to an intact glass due to its different local appearance. In contrast, semantic-level outlier samples differ at the semantic level, meaning that they have different meanings or concepts than the ID samples. For example, a cat is an outlier sample when we consider dog semantics as ID because they represent different concepts.

\subsection{Our Method of Generating labels} A reliable and generalized approach for outlier detection must have the capability to detect both semantic-level and pixel-wise outliers, as discussed in the previous section. To this end, our proposed method constructs n-outlier labels by combining two sets of words: near-distribution labels and negative adjectives derived from a inlier label name. We hypothesize that the former set can detect semantic-level outliers, while the latter set is effective in detecting pixel-wise outliers. Additionally, we include an extra label, marked as 'others', in the labels list to increase the diversity of exposures.

To generate negative adjectives, we employ a set of constant texts that are listed below and used across all experimental settings (\textbf{X} is the inlier label name):
    \begin{itemize}
        \item \textit{A photo of \textbf{X} with a crack }
        \item \textit{A photo of a broken \textbf{X}}
        \item \textit{A photo of \textbf{X} with a defect }
        \item \textit{A photo of \textbf{X} with damage}
        \item \textit{A photo of \textbf{X} with a scratch }
        \item \textit{A photo of \textbf{X} with a hole }
        \item \textit{A photo of \textbf{X} torn}
        \item \textit{A photo of \textbf{X} cut}
        \item \textit{A photo of \textbf{X} with contamination}
        \item \textit{A photo of \textbf{X} with a fracture}
        \item \textit{A photo of a damaged \textbf{X}}
        \item \textit{A photo of a fractured \textbf{X}}
        \item \textit{A photo of \textbf{X} with destruction}
        \item \textit{A photo of \textbf{X} with a mark}
    \end{itemize}

For  n-outlier labels, we utilize Word2Vec to search for semantically meaningful word embeddings after inlierizing the words through a process of lemmatization. First, we obtain the embedding of the inlier class label and then search among the corpus to identify the 1000 nearest neighbors of the inlier class label. \\
In the subsequent phase, we employ the combination of Imagenet labels and CLIP to effectively identify and eliminate labels that demonstrate semantic equivalence to the inlier label. Initially, we leverage CLIP to derive meaningful representations of the Imagenet labels. Then, we calculate the norm of the pairwise differences among these obtained representations. By computing the average of these values, a threshold is established, serving as a determinant of the degree of semantic similarity between candidate labels and the inlier label. The threshold is defined as:
\begin{equation}
\tau_{text} =   \frac{ \sum_{  i=1,i\neq j}^M \sum_{j=1}^{M} |E_{T}(y_i) - E_{T}(y_j)|}{M(M-1)} \end{equation} 
In which, $M$ is the number of Imagenet labels, and $y_i$s are the Imagenet labels.\\
Consequently, we filter out labels whose CLIP output exhibits a discrepancy from the inlier class(es) that falls below the threshold. 

We then sample n-outlier labels from the obtained words based on the similarity factor of the neighbors to the inlier class label. The selection probability of the n-outlier labels is proportional to their similarity to the inlier class label. Finally, we compile a list of n-outlier labels to serve as near outlier labels.

\begin{table*}[tbh]
\caption{The detailed AUROC scores of the class-specific experiments for (One-Class) Novelty Detection setting in CIFAR10, CIFAR100, MNIST, Fashion-MNIST datasets.}
\label{Detailed_one_class}
\begin{subtable}{1\textwidth}
\BlankLine
\subcaption{CIFAR10}
\label{}
\setlength{\tabcolsep}{12pt}
\resizebox{\linewidth}{!}{\begin{tabular}{llr*{13}{c}} 
\hline\noalign{\smallskip}

  
   \multicolumn{1}{c}{Method} & \multicolumn{1}{c}{Attack} & &\multicolumn{9}{c}{Class } &\multicolumn{1}{c}{ Average }  
     \\      \cmidrule(lr){1-1} \cmidrule(lr){2-2} 
    \cmidrule(lr){3-12}  
    \cmidrule(lr){13-13}   
   
  & &\multirow{2}{*}{0}&\multirow{2}{*}{1}&\multirow{2}{*}{2}&\multirow{2}{*}{3}&\multirow{2}{*}{4} &\multirow{2}{*}{5}&\multirow{2}{*}{6} & \multirow{2}{*}{7} & \multirow{2}{*}{8} &\multirow{2}{*}{9}  \\
  & & & & & &  & &  &   &   & &  \\
  
\noalign{\smallskip}
 \hline

\noalign{\smallskip}
\multirow{4}{*}{Ours} &Clean& 91.7 & 97.3 & 77.4 & 74.0 & 82.6 & 81.2 & 91.5 & 92.8 & 94.0 & 91.7 & 87.4
\\ & BlackBox & 89.9 & 95.8 & 75.5 & 72.1 & 81.6 & 79.1 & 89.1 & 91.2 & 92.4 & 89.6 & 85.6
 
\\ & PGD-1000& 76.6 & 81.4 & 59.1 & 55.2 & 65.0 & 65.9 & 73.6 & 69.6 & 78.9 & 77.3 & 70.2

 \\ & AutoAttack& 75.7 & 80.0 & 58.7 & 53.6 & 64.2 & 65.1 & 72.1 & 68.8 & 78.9 & 76.0 & 69.3
 
  \\

\noalign{\smallskip}

\hline
\end{tabular}}
\end{subtable}

 \bigskip

\begin{subtable}{1\textwidth}
\BlankLine
\subcaption{ CIFAR100 }
\label{}
\resizebox{\linewidth}{!}{\begin{tabular}{llr*{23}{c}} 
\hline\noalign{\smallskip}

  
   \multicolumn{1}{c}{Method} & \multicolumn{1}{c}{Attack} & &\multicolumn{19}{c}{Class } &\multicolumn{1}{c}{ Average }  
     \\      \cmidrule(lr){1-1} \cmidrule(lr){2-2} 
    \cmidrule(lr){3-22}  
    \cmidrule(lr){23-23}   

  & &\multirow{2}{*}{0}&\multirow{2}{*}{1}&\multirow{2}{*}{2}&\multirow{2}{*}{3}&\multirow{2}{*}{4} &\multirow{2}{*}{5}&\multirow{2}{*}{6} & \multirow{2}{*}{7} & \multirow{2}{*}{8} &\multirow{2}{*}{9} &\multirow{2}{*}{10}&\multirow{2}{*}{11}&\multirow{2}{*}{12}&\multirow{2}{*}{13}&\multirow{2}{*}{14} &\multirow{2}{*}{15}&\multirow{2}{*}{16} & \multirow{2}{*}{17} & \multirow{2}{*}{18} &\multirow{2}{*}{19}  \\

  & & & & & & & & & & & & & & & & & & & & &   \\
  
\noalign{\smallskip}
 \hline

\noalign{\smallskip}
\multirow{4}{*}{Ours} &Clean & 79.0 & 78.6 & 95.5 & 78.1 & 89.2 & 69.4 & 76.2 & 82.6 & 77.9 & 87.4 & 92.7 & 73.1 & 77.6 & 65.0 & 83.9 & 65.5 & 72.1 & 93.6 & 83.5 & 72.0 & 79.6
\\ & BlackBox & 76.9 & 77.0 & 93.2 & 75.4 & 87.6 & 67.3 & 74.0 & 79.6 & 75.7 & 85.1 & 89.9 & 71.3 & 74.7 & 62.9 & 81.0 & 63.4 & 69.7 & 91.8 & 81.8 & 70.1 & 77.4
 
\\ & PGD-1000 & 59.7 & 60.2 & 83.0 & 61.3 & 72.9 & 53.0 & 60.4 & 62.6 & 59.0 & 76.5 & 78.1 & 51.2 & 61.0 & 47.4 & 60.3 & 44.5 & 51.9 & 78.7 & 61.2 & 58.1 & 62.1
 \\ & AutoAttack & 56.8 & 59.4 & 80.4 & 60.9 & 73.9 & 51.2 & 61.3 & 61.7 & 58.3 & 75.9 & 75.7 & 51.7 & 59.9 & 45.7 & 59.2 & 44.4 & 50.6 & 76.1 & 60.5 & 57.1 & 61.0
 
  \\

\noalign{\smallskip}

\hline
\end{tabular}}
\end{subtable}

 \bigskip

\begin{subtable}{1\textwidth}
\BlankLine
\subcaption{ MNIST }
\label{}
\setlength{\tabcolsep}{10pt}
\resizebox{\linewidth}{!}{\begin{tabular}{llr*{13}{c}} 
\hline\noalign{\smallskip}

  
   \multicolumn{1}{c}{Method} & \multicolumn{1}{c}{Attack} & &\multicolumn{9}{c}{Class } &\multicolumn{1}{c}{ Average }  
     \\      \cmidrule(lr){1-1} \cmidrule(lr){2-2} 
    \cmidrule(lr){3-12}  
    \cmidrule(lr){13-13}   
   
  & &\multirow{2}{*}{0}&\multirow{2}{*}{1}&\multirow{2}{*}{2}&\multirow{2}{*}{3}&\multirow{2}{*}{4} &\multirow{2}{*}{5}&\multirow{2}{*}{6} & \multirow{2}{*}{7} & \multirow{2}{*}{8} &\multirow{2}{*}{9}  \\
  & & & & & &  & &  &   &   & &  \\
  
\noalign{\smallskip}
 \hline

\noalign{\smallskip}
\multirow{4}{*}{Ours} &Clean & 99.8 & 99.4 & 99.3 & 99.2 & 99.6 & 99.4 & 99.8 & 98.9 & 99.4 & 98.8 & 99.4
\\ & BlackBox & 98.7 & 99.0 & 98.2 & 98.8 & 98.3 & 98.9 & 99.4 & 97.8 & 98.5 & 98.2 & 98.6
 
\\ & PGD-1000 & 96.3 & 96.1 & 95.5 & 92.0 & 97.4 & 95.1 & 96.4 & 92.5 & 94.0 & 91.2 & 94.6
 \\ & AutoAttack& 96.9 & 96.1 & 96.3 & 92.0 & 96.7 & 96.3 & 98.1 & 92.5 & 95.2 & 92.0 & 95.2
 
  \\

\noalign{\smallskip}

\hline
\end{tabular}}
\end{subtable}

 \bigskip

\begin{subtable}{1\textwidth}
\BlankLine
\subcaption{ Fashion-MNIST }
\label{}
\setlength{\tabcolsep}{10pt}
\resizebox{\linewidth}{!}{\begin{tabular}{llr*{13}{c}} 
\hline\noalign{\smallskip}

  
   \multicolumn{1}{c}{Method} & \multicolumn{1}{c}{Attack} & &\multicolumn{9}{c}{Class } &\multicolumn{1}{c}{ Average }  
     \\      \cmidrule(lr){1-1} \cmidrule(lr){2-2} 
    \cmidrule(lr){3-12}  
    \cmidrule(lr){13-13}   
   
  & &\multirow{2}{*}{0}&\multirow{2}{*}{1}&\multirow{2}{*}{2}&\multirow{2}{*}{3}&\multirow{2}{*}{4} &\multirow{2}{*}{5}&\multirow{2}{*}{6} & \multirow{2}{*}{7} & \multirow{2}{*}{8} &\multirow{2}{*}{9}  \\
  & & & & & &  & &  &   &   & &  \\
  
\noalign{\smallskip}
 \hline

\noalign{\smallskip}
\multirow{4}{*}{Ours} &Clean & 95.8 & 99.7 & 93.9 & 93.4 & 92.9 & 98.3 & 86.5 & 98.6 & 98.5 & 98.8 & 95.6
\\ & BlackBox & 94.4 & 98.6 & 92.7 & 92.6 & 91.2 & 96.9 & 85.1 & 97.1 & 97.5 & 97.1 & 94.3
 
\\ & PGD-1000 & 89.7 & 98.4 & 82.9 & 79.9 & 76.1 & 94.8 & 71.4 & 94.0 & 90.7 & 93.9 & 87.2
 \\ & AutoAttack & 89.7 & 98.1 & 83.0 & 80.9 & 76.5 & 95.1 & 72.6 & 94.2 & 92.4 & 94.1 & 87.6
 
  \\

\noalign{\smallskip}

\hline
\end{tabular}}
\end{subtable}

\end{table*}

\section{Outlier Detection Performance Under Various Strong Attacks}
Tables \ref{Detailed_one_class}, \ref{OOD_details}, and \ref{Table 1:Novelty Detection_2} demonstrate the robust detection performance of RODEO when subjected to various strong attacks.

\begin{table*}[h]
\centering
\caption{OOD detailed results}
\label{OOD_details}
\begin{subtable}{1\textwidth}

\BlankLine
\centering
\subcaption{CIFAR10}
\label{OOD_CIFAR10}
\resizebox{\linewidth}{!}{\begin{tabular}{llr*{9}{c}} 
\hline\noalign{\smallskip}

   \multicolumn{1}{c}{Out-Dataset} & \multicolumn{1}{c}{Attack} &\multicolumn{8}{c}{Method}\\
    \cmidrule(lr){3-10}

  & &OpenGAN&ViT (RMD)& ATOM &AT (OpenMax)&OSAD (OpenMax)& ALOE (MSP)& ATD & RODEO    \\
\noalign{\smallskip}
 \hline
\noalign{\smallskip}
 
\multirow{2}{*}{MNIST}&\multirow{1}{*}{\centering Clean}&{99.4}&{98.7}&98.4&80.4&86.2&74.6&98.8&96.9 \\ &\multirow{1}{*}{\centering PGD-1000}&29.4 & 2.6 & 0.0 & 38.7 & 54.4 & 21.8 & \textbf{89.3} & {83.1} \\
    \cmidrule(lr){1-1} \cmidrule(lr){2-2} 
    \cmidrule(lr){3-10}

\multirow{2}{*}{TiImgNet}&\multirow{1}{*}{\centering Clean}&\textbf{95.3}&{95.2}&97.2&81.0&81.9& 82.1&88.0&85.1 \\ &\multirow{1}{*}{\centering PGD-1000}
& 14.3 & 1.4 & 3.4 & 15.6 & 18.4 & 20.7 & {46.1} & \textbf{46.3} \\
    \cmidrule(lr){1-1} \cmidrule(lr){2-2} 
    \cmidrule(lr){3-10}

\multirow{2}{*}{Places}&\multirow{1}{*}{\centering Clean}&95.0&\textbf{98.3}&98.7&82.5& 83.3& 85.1&92.5&{96.2} \\ &\multirow{1}{*}{\centering PGD-1000}&
16.4 & 2.2 & 5.6 & 18.0 & 20.3 & 21.9 & {59.8} & \textbf{70.2} \\
    \cmidrule(lr){1-1} \cmidrule(lr){2-2} 
    \cmidrule(lr){3-10}

\multirow{2}{*}{LSUN}&\multirow{1}{*}{\centering Clean}&96.5 &98.4&99.1&85.0 &   86.4 &  {98.7} &96.0& \textbf{99.0}   \\ &\multirow{1}{*}{\centering PGD-1000}&
23.1 & 1.1 & 1.0 & 18.7 & 19.8 & 50.7 & {68.1} & \textbf{85.1} \\
    \cmidrule(lr){1-1} \cmidrule(lr){2-2} 
    \cmidrule(lr){3-10}  

\multirow{2}{*}{iSUN}&\multirow{1}{*}{\centering Clean}&96.3&\textbf{98.6}&99.5&83.9&84.0&{98.3}&94.8&97.7 \\ &\multirow{1}{*}{\centering PGD-1000}
& 22.1 & 1.2 & 2.5 & 18.6 & 19.4 & 49.5 & {65.9} & \textbf{78.7} \\
    \cmidrule(lr){1-1} \cmidrule(lr){2-2} 
    \cmidrule(lr){3-10}  

\multirow{2}{*}{Birds}&\multirow{1}{*}{\centering Clean}&\textbf{98.3}&76.0&95.8&75.1&76.5&79.9&93.6&{97.8}\\ &\multirow{1}{*}{\centering PGD-1000}
& 33.6 & 0.0 & 5.2 & 13.8 & 18.2 & 20.9 & {68.1} & \textbf{76.0} \\
    \cmidrule(lr){1-1} \cmidrule(lr){2-2} 
    \cmidrule(lr){3-10}  

\multirow{2}{*}{Flower}&\multirow{1}{*}{\centering Clean}&98.3&{99.6}&99.8&85.5&88.6&79.0&\textbf{99.7}&99.5 \\ &\multirow{1}{*}{\centering PGD-1000}
& 29.2 & 1.7 & 19.0 & 20.0 & 25.7 & 18.7 & \textbf{92.8} & {88.7} \\
    \cmidrule(lr){1-1} \cmidrule(lr){2-2} 
    \cmidrule(lr){3-10}

\multirow{2}{*}{COIL}&\multirow{1}{*}{\centering Clean}&\textbf{98.1}&{95.9}&97.3&70.3&75.0&76.8&90.8&91.1  \\ &\multirow{1}{*}{\centering PGD-1000}
& 37.6 & 3.0 & 8.6 & 15.7 & 17.8 & 18.4 & {57.2} & \textbf{59.5} \\
    \cmidrule(lr){1-1} \cmidrule(lr){2-2} 
    \cmidrule(lr){3-10}  

\multirow{2}{*}{CIFAR100}&\multirow{1}{*}{\centering Clean}&{95.0}&\textbf{97.3}& 94.2&79.6&79.9&78.8&82.0&75.6\\&\multirow{1}{*}{\centering PGD-1000}&
9.2 & 0.8 & 1.6 & 15.1 & 17.2 & 16.1 & {37.1} & \textbf{37.8} \\
    \cmidrule(lr){1-1} \cmidrule(lr){2-2} 
    \cmidrule(lr){3-10}  
 \noalign{\smallskip}
 
\multirow{2}{*}{Avg.}&\multirow{1}{*}{\centering Clean}&\textbf{97.1}&{95.1}& 97.8 &80.5&82.7&84.3&94.3&93.2 \\ &\multirow{1}{*}{\centering PGD-1000}&
25.7 & 1.6 & 5.1 & 19.9 & 24.2 & 27.8 & {68.4} & \textbf{69.5} \\
 \hline

\end{tabular}}
\end{subtable}

\begin{subtable}{1\textwidth}

\BlankLine
\centering
\subcaption{CIFAR100}
\label{OOD_CIFAR100}
\resizebox{\linewidth}{!}{\begin{tabular}{llr*{9}{c}} 
\hline\noalign{\smallskip}

   \multicolumn{1}{c}{Out-Dataset} & \multicolumn{1}{c}{ Attack}  &\multicolumn{7}{c}{Method}\\
    \cmidrule(lr){3-10}

  & &OpenGAN&ViT (RMD)&ATOM&AT (RMD)&OSAD (MD)& ALOE(MD)& ATD & RODEO    \\
\noalign{\smallskip}
 \hline
\noalign{\smallskip}

\multirow{2}{*}{MNIST}&\multirow{1}{*}{\centering Clean}&{99.0}&83.8&90.4&41.1&95.9&96.6&97.3&\textbf{99.7}&   \\& \multirow{1}{*}{\centering PGD-1000}&12.9 & 0.0 & 0.0 & 12.5 & 80.3 & 71.4 & {84.6} & \textbf{96.0} \\
    \cmidrule(lr){1-1} \cmidrule(lr){2-2} 
    \cmidrule(lr){3-10}

\multirow{2}{*}{TiImgNet}&\multirow{1}{*}{\centering Clean}&{88.3}&\textbf{90.1}&85.1&72.3&48.3&58.1&73.7&72.9&  \\& \multirow{1}{*}{\centering PGD-1000}&2.2 & 1.4 & 0.1 & 10.3 & 8.2 & 4.6 & {24.3} & \textbf{37.3} \\
    \cmidrule(lr){1-1} \cmidrule(lr){2-2} 
    \cmidrule(lr){3-10}

\multirow{2}{*}{Places}&\multirow{1}{*}{\centering Clean}&\textbf{94.5}&92.3&94.8&73.1&55.7&75.0&83.3&{93.0}&  \\& \multirow{1}{*}{\centering PGD-1000}
&3.2 & 2.0 & 3.0 & 11.0 & 10.4 & 12.4 & {40.0} & \textbf{66.6} \\
    \cmidrule(lr){1-1} \cmidrule(lr){2-2} 
    \cmidrule(lr){3-10}

\multirow{2}{*}{LSUN}&\multirow{1}{*}{\centering Clean}&{97.1}&91.6&96.6&76.0&55.6&83.1&89.2&\textbf{98.1}&  \\& \multirow{1}{*}{\centering PGD-1000}
&5.6 & 0.0 & 1.5 & 11.2 & 8.7 & 19.0 & {47.7} & \textbf{83.1} \\
    \cmidrule(lr){1-1} \cmidrule(lr){2-2} 
    \cmidrule(lr){3-10}

\multirow{2}{*}{iSUN}&\multirow{1}{*}{\centering Clean}&\textbf{96.4}&91.4&96.4&72.5&54.8&80.1&86.5&{95.1}&  \\& \multirow{1}{*}{\centering PGD-1000}&5.8 & 0.0 & 1.4 & 10.2 & 8.9 & 20.4 & {45.6} & \textbf{75.6} \\
    \cmidrule(lr){1-1} \cmidrule(lr){2-2} 
    \cmidrule(lr){3-10}

\multirow{2}{*}{Birds}&\multirow{1}{*}{\centering Clean}&96.6&\textbf{97.8}&95.1&73.1&54.5&78.4&93.4&{96.8}&  \\& \multirow{1}{*}{\centering PGD-1000}&5.7 & 8.8 & 12.5 & 11.7 & 9.3 & 22.0 & {64.5} & \textbf{74.2} \\
    \cmidrule(lr){1-1} \cmidrule(lr){2-2} 
    \cmidrule(lr){3-10}

\multirow{2}{*}{Flower}&\multirow{1}{*}{\centering Clean}&96.8&96.6&98.9&77.6&69.6&85.1&\textbf{97.2}&\textbf{97.2}&  \\& \multirow{1}{*}{\centering PGD-1000}&7.6 & 3.8 & 15.5 & 14.0 & 21.2 & 30.1 & \textbf{78.4} & {77.2} \\
    \cmidrule(lr){1-1} \cmidrule(lr){2-2} 
    \cmidrule(lr){3-10}  

\multirow{2}{*}{COIL}&\multirow{1}{*}{\centering Clean}&\textbf{97.7}&{88.1}&79.5&74.4&57.5&77.9&80.6&78.6&  \\& \multirow{1}{*}{\centering PGD-1000}&14.0 & 1.8 & 0.0 & 14.6 & 12.3 & 17.5 & \textbf{43.6} & {43.1} \\
    \cmidrule(lr){1-1} \cmidrule(lr){2-2} 
    \cmidrule(lr){3-10}  

\multirow{2}{*}{CIFAR10}&\multirow{1}{*}{\centering Clean}&{92.9}&\textbf{94.8}&87.5&67.5&50.3&43.6&57.5&61.5  \\ & \multirow{1}{*}{\centering PGD-1000}&7.4 & 4.1 & 2.0 & 9.0 & 8.6 & 1.3 & {12.1} & \textbf{29.0} \\
\cmidrule(lr){1-1} \cmidrule(lr){2-2} 
\cmidrule(lr){3-10}  

\noalign{\smallskip}
\multirow{2}{*}{Avg.}&\multirow{1}{*}{\centering Clean}&\textbf{95.8}&{91.5}&91.6&70.0& 61.5 &79.3&87.7&88.1&   \\& \multirow{1}{*}{\centering PGD-1000}
&7.1 & 2.0 & 3.7 & 11.9 & 19.9 & 24.7 & {53.6} & \textbf{64.7} \\
 \hline
\end{tabular}}
\end{subtable}
 
\end{table*}

\begin{table*}[h ]
\caption{AUROC scores for (One-Class) Novelty Detection under three different adversarial attacks with $ \epsilon = \frac{8}{255} $ 
}
\label{Table 1:Novelty Detection_2}
\BlankLine
\centering
\label{Table 1.a:Novelty Detection for Low-Resolution_0}
\resizebox{\linewidth}{!}{

\begin{tabular}{ll*{5}{c}*{6}{c}c} 

     \noalign{\smallskip}
     \multirow{3}{*}{\textbf{\tenpt{Method}}}& \multirow{3}{*}{\rotatebox[origin=c]{90}{\textbf{\tenpt{Attack}}}} & \multicolumn{5}{c}{\textbf{\tenpt{Low-Res Datasets}}} &
     \multicolumn{6}{c}{\textbf{\tenpt{High-Res Datasets}}} \\
     \noalign{\smallskip}
    \cmidrule(lr){3-7}  
    \cmidrule(lr){8-13}  
    & & \multirow{2}{*}{CIFAR10} & \multirow{2}{*}{CIFAR100} & \multirow{2}{*}{MNIST} & \multirow{2}{*}{FMNIST} & \multirow{2}{*}{SVHN} & \multirow{2}{*}{MVTecAD} & \multirow{2}{*}{Head-CT} & \multirow{2}{*}{BrainMRI} & \multirow{2}{*}{Tumor Detection} & \multirow{2}{*}{Covid19} & \multirow{2}{*}{Imagenet-30} \\ \\
    \specialrule{0.8pt}{\aboverulesep}{\belowrulesep}

    \multirow{4}{*}{DeepSVDD} & 
    
    Clean & 64.8 & 67.0 &94.8 &{94.5} &60.3 &67.0 &62.5 &74.5 &70.8 &61.9 & 62.8 \\

    &BlackBox & 54.6 & 55.3 & 65.7 & 66.8 & 42.7 & 36.0 & 44.1 & 52.7 & 42.0 & 32.4 & 50.1 \\
    
    &PGD-1000 & 22.4 &14.1 &10.8 &48.7 &7.2 &6.3 &0.0 &3.9 &1.6 &0.0 & 22.0 \\
    
    &AutoAttack & 9.7 & 5.8 & 9.6 & 38.2 & 2.4 & 0.0 & 0.0 & 2.1 & 0.0 & 0.0 & 7.3 \\
    \specialrule{0.8pt}{\aboverulesep}{\belowrulesep}
    
    \multirow{4}{*}{CSI} &
    
    Clean & 94.3& 89.6& 93.8& 92.7& 96.0& 63.6& 60.9& 93.2& 85.3& 65.1& 91.6  \\

    &BlackBox & 43.1 & 34.7 & 72.3 & 64.2 & 32.0 & 37.7 & 50.3 & 61.0 & 60.9 & 25.7 & 36.8 \\
    
    &PGD-1000 & 2.7& 2.5& 0.0& 4.1& 1.3& 0.0& 0.1& 0.0& 0.0& 0.0& 0.3 \\
    
    &AutoAttack & 0.0 & 0.0 & 0.0 & 3.1 & 0.7 & 0.0 & 0.0 & 0.0 & 0.0 & 0.0 & 0.0 \\
    \specialrule{0.8pt}{\aboverulesep}{\belowrulesep}
    
    \multirow{4}{*}{MSAD} &
    
    Clean & 97.2& 96.4& 96.0& 94.2& 63.1& 87.2& 59.4& 99.9& 95.1& 89.2&  96.9 \\

    &BlackBox & 38.4 & 51.8 & 58.1 & 73.8 & 40.9 & 41.3 & 42.6 & 64.2 & {67.7} & 53.6 & 34.9 \\
    
    &PGD-1000 & 0.0& 2.6& 0.0& 0.0& 0.5& 0.4& 0.0& 1.5& 0.0& 4.0& 0.0 \\

    &AutoAttack & 0.0 & 1.7 & 0.0 & 0.0 & 0.0 & 0.0 & 0.0 & 0.0 & 0.0 & 1.9 & 0.0 \\
    
    \specialrule{0.8pt}{\aboverulesep}{\belowrulesep}
    
    \multirow{4}{*}{Transformaly} &
    
    Clean & 98.3& 97.3& 94.8& 94.4& 55.4& 87.9& 78.1& 98.3& 97.4& 91.0& 97.8  \\

    &BlackBox & 62.9 & 64.0 & 73.5 & {79.6} & 26.4 & 56.0 & 65.0 & {71.6} & 78.6 & {70.7} & 63.5 \\
    
    &PGD-1000 & 0.0& 4.1& 9.9& 0.2& 7.3& 0.0& 5.8& 4.5& 6.4& 9.1& 0.0 \\
    
    &AutoAttack & 0.0 & 2.6 & 6.7 & 0.0 & 1.9 & 0.0 & 3.2 & 1.6 & 5.1 & 4.4 & 0.0\\
    
    \specialrule{0.8pt}{\aboverulesep}{\belowrulesep}
    
    \multirow{4}{*}{PatchCore} &
    
    Clean & 68.3& 66.8& 83.2& 77.4& 52.1& 99.6& 98.5& 91.4& 92.8& 77.7& 98.1 \\

    &BlackBox & 18.1 & 23.6 & 46.9 & 58.2 & 12.5 & {58.3} & {80.7} & 72.5 & 67.2 & 56.3 & 24.4 \\
    
    &PGD-1000 & 0.0& 0.0& 0.0& 0.0& 3.0& 6.5& 1.3& 0.0& 9.2& 3.8& 0.0 \\

    &AutoAttack & 0.0 & 0.0 & 0.0 & 0.0 & 1.1 & 4.8 & 0.0 & 0.0 & 6.1 & 0.5 & 0.0 \\
    
    \specialrule{0.8pt}{\aboverulesep}{\belowrulesep}
    
    \multirow{4}{*}{PrincipaLS} &
    
    Clean & 57.7& 52.0& 97.3& 91.0& 63.0& 63.8& 68.9& 70.2& 73.5& 54.2& 74.2 \\

    &BlackBox & 33.3 & 39.4 & 91.6 & 71.1 & 47.7 & 45.2 & 54.3 & 56.9 & 56.4 & 43.8  & 31.9\\
    
    &PGD-1000 & 23.6& 15.3& 76.4& 60.8& 30.3& 24.0& 26.8& 32.9& 24.4& 15.1& 18.7 \\

    &AutoAttack & 20.2 & {14.7} & {72.5} & {58.2} & 29.5 & {12.6} & {16.2} & {17.8} & {14.7} & {9.1} & 18.0 \\
    
    \specialrule{0.8pt}{\aboverulesep}{\belowrulesep}
    \multirow{4}{*}{OCSDF} &
    
    Clean & 57.1& 48.2& 95.5& 90.6& 58.1& 58.7& 62.4& 63.2& 65.2& 46.1& 61.4 \\

    &BlackBox & 48.4 & 36.9 & 85.7 & 77.0 & 46.8 & 33.4 & 40.2 & 48.0 & 35.0 & 28.5 & 52.7 \\
    
    &PGD-1000 & 22.9& 14.6& 60.8& 53.2& 23.0& 4.8& 13.0& 18.6& 16.3& 8.4& 18.7 \\

    &AutoAttack & 15.3 & 12.0 & 58.3 & 49.2 & 19.8 & 0.3 & 8.5 & 12.5 & 10.1 & 6.5 & 14.1\\

    \specialrule{0.8pt}{\aboverulesep}{\belowrulesep}
    \multirow{4}{*}{APAE} &
    
    Clean & 55.2& 51.8& 92.5& 86.1& 52.6& 62.1& 68.1& 55.4& 64.6& 50.7& 62.0 \\

    &BlackBox & 37.6 & 16.3 & 73.0 & 24.3 & 41.6 & 35.9 & 45.2 & 27.1 & 43.1 & 26.1 & 33.9 \\
    
    &PGD-1000 & 0.0& 0.0& 21.3& 9.7& 16.5& 3.9& 6.4& 9.1& 15.0& 9.8& 24.8 \\

    &AutoAttack & 0.0 & 0.0 & 19.8 & 7.0 & 16.2 & 1.8 & 3.8 & 8.3 & 8.3 & 8.7 & 0.0 \\
    
    \specialrule{0.8pt}{\aboverulesep}{\belowrulesep}  

    \multirow{4}{*}{EXOE} &
    
    Clean & 99.6& 97.8& 96.0& 94.7& 68.2& 76.2& 82.4 & 86.2 & 79.3 & 72.5 & 98.1 \\

    &BlackBox & 68.3 & 71.5 & 79.4 & 70.4 & 31.1 & 52.7 & 44.6 & 59.0 & 51.4 & 45.5 & 37.2 \\
    
    &PGD-1000 &  0.3& 0.0& 0.0& 1.8& 0.0& 0.2& 0.1 & 0.1 & 0.0 & 0.8 & 0.0 \\

    &AutoAttack & 0.0 & 0.0 & 0.0 & 1.1 & 0.0 & 0.1 & 0.1 & 0.0 & 0.0 & 0.2 & 0.0 \\
    
    \specialrule{0.8pt}{\aboverulesep}{\belowrulesep}  
    
    \multirow{5}{*}{RODEO (ours)} &
    
    Clean & 87.4& 79.6& 99.4& 95.6& 78.6& 61.5& 87.3& 76.3& 89.0& 79.6& 86.1 \\

    &BlackBox & 85.6 & 77.4 & 98.6 & {94.3} & 77.2 & {60.0} & {85.6} & {75.8} & {87.2} & {75.0} & 83.9 \\
    

    &PGD-1000 & 70.2 &62.1 &94.6 &87.2 &33.8 &14.9 &68.6 &	68.4 &67.0 &58.3 & 73.5  \\
    
    &AutoAttack & 69.3 &61.0 &95.2 &87.6 &33.2 &14.2 &68.4 &70.5 &66.9 &58.8 & 76.8 \\
    &A3 & 70.5 &61.3& 	94.0& 	87.0& 	31.8& 	13.4& 	68.1& 	67.7& 	65.6& 	57.6&  72.4 \\
    
    \specialrule{0.8pt}{\aboverulesep}{\belowrulesep}
    
    \end{tabular}}

\end{table*}

\begin{table*}[tbh]
 
\centering
\caption{Detailed AUROC(\%) comparison of different exposure techniques and our introduced method of Adaptive Exposure over different datasets.}
 \label{Table:X}

\BlankLine
\centering
\label{Table:Ablation_21}
\resizebox{\linewidth}{!}{
\begin{tabular}{llr*{15}{c}} 

\toprule
\diagbox{\small Exposure Technique}{\small Target Dataset}&Attack&CIFAR10&CIFAR100&MNIST &Fashion-MNIST&MVTec-ad & Head-CT& Brain-MRI & Tumor Detection & Covid19 \\
\specialrule{1.5pt}{\aboverulesep}{\belowrulesep}



\multirow{2}{*}{Gaussian Noise} & Clean & 64.4 & 54.6 & 60.1 & 62.7 & 41.9 & 59.0 & 45.3 & 51.7 & 40.7\\
\noalign{\smallskip}
& PGD & 15.2 & 11.9 & 11.6 & 15.0 & 0.0 & 0.5 & 0.0 & 0.9 & 0.0 \\
\noalign{\smallskip}
\cmidrule(lr){1-1}
\cmidrule(lr){2-2}
\cmidrule(lr){3-12}

\multirow{2}{*}{ImageNet (Fixed OE Dataset)} & Clean & {87.3}  & {79.6}  & {90.0}  & {93.0}  & {64.6}  & 61.8  & {69.3} & 71.8 & 62.7\\
\noalign{\smallskip}
& PGD & 69.3 & 64.5 & 42.8 & 82.0 & 0.0 & 1.3 & 0.0 & 22.1 & 23.4 \\
\noalign{\smallskip}
\cmidrule(lr){1-1}
\cmidrule(lr){2-2}
\cmidrule(lr){3-12}

\multirow{2}{*}{Mixup with ImageNet} & Clean & 59.4&{56.1}&59.6&74.2&58.5&54.4&57.3&{76.4}&{69.2}\\
\noalign{\smallskip}
& PGD & 30.8 & 27.1 & 1.7 & 47.8 & 0.5 & 20.6 & 10.8 & 53.1 & 50.2 \\
\noalign{\smallskip}
\cmidrule(lr){1-1}
\cmidrule(lr){2-2}
\cmidrule(lr){3-12}

\multirow{2}{*}{Fake Image Generation} & Clean & 29.5&23.0&76.0&52.2&43.5&63.7&65.2&65.2&42.7\\
\noalign{\smallskip}
& PGD & 15.5 & 14.3 & 51.1 & 30.6 & 7.2 & 6.9 & 28.2 & 32.1 & 12.4 \\
\noalign{\smallskip}
\cmidrule(lr){1-1}
\cmidrule(lr){2-2}
\cmidrule(lr){3-12}

\multirow{2}{*}{Stable Diffusion Prompt} & Clean & 62.4&54.8&84.3&63.7&54.9&{71.5}&66.7&45.8&37.1\\
\noalign{\smallskip}
& PGD & 35.0 & 34.4 & 62.1 & 47.1 & 12.2 & 2.2 & 7.0 & 5.3 & 0.0 \\
\noalign{\smallskip}
\cmidrule(lr){1-1}
\cmidrule(lr){2-2}
\cmidrule(lr){3-12}

\multirow{2}{*}{Dream outlier Prompt} & Clean & 58.2& 50.3 & 80.5 & 66.8 & 55.0 & 69.9 & 68.6 & 42.7 & 44.1\\
\noalign{\smallskip}
& PGD & 24.7 & 20.7 & 51.4 & 45.9 & 12.7 & 1.2 & 5.0 & 10.9 & 0.1 \\
\noalign{\smallskip}
\cmidrule(lr){1-1}
\cmidrule(lr){2-2}
\cmidrule(lr){3-12}

\multirow{2}{*}{\textbf{Adaptive Exposure}} & Clean & {87.4}&{79.6}&{99.4}&{95.6}&{61.5}&{87.3}&{76.3}&{89.0}&{79.6}\\
\noalign{\smallskip}
& PGD &{70.2}&{61.3}&{94.6}&{87.2}&{14.9}&{68.6}&{68.4}&{67.0}&{58.3}\\
 \noalign{\smallskip}
\specialrule{1.5pt}{\aboverulesep}{\belowrulesep}

\end{tabular}}
\end{table*}

\begin{table*}[tbh]
\centering

\caption{FID, Density, and Coverage Metrics for Adaptive Exposure technique}
\label{Table:Ablation_2}
\BlankLine
\resizebox{\linewidth}{!}{
\begin{tabular}{ll*{15}{c}}
\toprule
\noalign{\medskip}
\diagbox{\small Exposure Technique}{\small Target Dataset} & \textbf{Metric} & CIFAR10 & CIFAR100 & MNIST & Fashion-MNIST & MVTec-ad & Head-CT & Brain-MRI & Tumor Detection & Covid19 \\
\noalign{\medskip}
\midrule

\noalign{\medskip}

\multirow{2}{*}{\textbf{Adaptive Exposure}} & FID & 145 & 156 & 133 & 134 & 263 & 204 & 165 & 186 & 201 \\
& D / C & 0.87 / 0.64 & 0.63 / 0.62 & 0.75 / 0.86 & 0.61 / 0.44 & 0.64 / 0.09 & 0.77 / 0.83 & 0.69 / 0.61 & 0.57 / 0.37 & 0.51 / 0.80 \\
\bottomrule
\end{tabular}
}
\label{Table_NON_AT}
\end{table*}
\centering
 
\begin{table*}[tbh]

\caption{The detailed AUROC of the experiments for ND, OSR, and OOD settings under different training modes.}
\label{Table:Ablation_Last}

\begin{subtable}{1\textwidth}

\subcaption{ND}
\setlength{\tabcolsep}{10pt}
\resizebox{\linewidth}{!}{\begin{tabular}{c*{12}{c}} 
\hline\noalign{\smallskip}

  
   \multicolumn{1}{c}{Method} & \multicolumn{1}{c}{Training Mode} & \multicolumn{1}{c}{Attack} &\multicolumn{6}{c}{Dataset }  
     \\      \cmidrule(lr){1-1} \cmidrule(lr){2-2} \cmidrule(lr){3-3} 
    \cmidrule(lr){4-9}  
   
  & & &\multirow{2}{*}{CIFAR10}&\multirow{2}{*}{CIFAR100}&\multirow{2}{*}{MNIST}&\multirow{2}{*}{FashionMNIST}&\multirow{2}{*}{Head-CT} &\multirow{2}{*}{Covid19}  \\
  & & & & & &  & \\
  
\noalign{\smallskip}
 \hline

\noalign{\smallskip}
\multirow{2}{*}{Ours} &Non-Adversarial&Clean / PGD-1000& 93.1 / 0.0 & 86.6 / 0.0 & 98.4 / 0.0 & 94.8 / 0.0 & 96.1 / 0.0 & 89.2 / 0.0 \\
  & Adversarial& Clean / PGD-1000& 87.4 / 70.2 & 79.6 / 61.3 & 99.4 / 94.6 & 95.6 / 87.2 & 87.3 / 68.6 & 79.6 / 58.3
  \\

\noalign{\smallskip}

\hline
\end{tabular}}
\end{subtable}

\medskip

\begin{subtable}{1\textwidth}
\centering

\subcaption{OSR}
\setlength{\tabcolsep}{20pt}
\resizebox{\linewidth}{!}{\begin{tabular}{c*{12}{c}} 
\hline\noalign{\smallskip}

  
   \multicolumn{1}{c}{Method} & \multicolumn{1}{c}{Training Mode} & \multicolumn{1}{c}{Attack} &\multicolumn{4}{c}{Dataset}  
     \\      \cmidrule(lr){1-1} \cmidrule(lr){2-2} \cmidrule(lr){3-3} 
    \cmidrule(lr){4-9}  
   
  & & &\multirow{2}{*}{CIFAR10}&\multirow{2}{*}{CIFAR100}&\multirow{2}{*}{MNIST}&\multirow{2}{*}{FashionMNIST}  \\
  & & & & & & \\
  
\noalign{\smallskip}
 \hline

\noalign{\smallskip}
\multirow{2}{*}{Ours} &Non-Adversarial&Clean / PGD-1000& 84.3 / 0.0 & 69.0 / 0.0 & 99.1 / 0.0 & 91.9 / 0.0 \\
  & Adversarial& Clean / PGD-1000& 79.6 / 62.7 & 64.1 / 35.3 & 97.2 / 85.0 & 87.7 / 65.3
  \\

\noalign{\smallskip}

\hline
\end{tabular}}
\end{subtable}

\medskip

\begin{subtable}{1\textwidth}
\centering

\subcaption{OOD}
\setlength{\tabcolsep}{25pt}
\resizebox{\linewidth}{!}{\begin{tabular}{c*{12}{c}} 
\hline\noalign{\smallskip}

  
   \multicolumn{1}{c}{Method} & \multicolumn{1}{c}{Training Mode} & \multicolumn{1}{c}{Attack} &\multicolumn{4}{c}{Dataset }  
     \\      \cmidrule(lr){1-1} \cmidrule(lr){2-2} \cmidrule(lr){3-3} 
    \cmidrule(lr){4-9}  
   
  & & &\multirow{2}{*}{CIFAR10 vs CIFAR100}&\multirow{2}{*}{CIFAR100 vs CIFAR10}  \\
  & & & &\\
  
\noalign{\smallskip}
 \hline

\noalign{\smallskip}
\multirow{2}{*}{Ours} &Non-Adversarial&Clean / PGD-1000& 83.0 / 0.0 & 71.2 / 0.0 \\
  & Adversarial& Clean / PGD-1000& 75.6 / 37.8 & 61.5 / 29.0 
  \\

\noalign{\smallskip}

\hline
\end{tabular}}
\end{subtable}

\end{table*}

\begin{table*}[thb]
\centering
\caption{Adversarial training step hyper-parameters}
\label{tab:adv_training_hyperparams}

\begin{tabular}{|c|c|c|c|c|c|c|c|}
\hline
\textbf{Adv. Tr.} & \textbf{N} & \textbf{High Res.} $\epsilon$ & \textbf{Low Res.} $\epsilon$ & $\alpha$ & \textbf{Classifier} & \textbf{Optimizer} & \textbf{LR} \\
\hline
PGD & 10 & $\frac{2}{255}$ & $\frac{8}{255}$ & $2.5 \times \frac{\epsilon}{N}$ & ResNet & Adam & 0.001 \\
\hline
\end{tabular}
\end{table*}

\begin{table*}[thb]
\centering
\caption{Generation step hyper-parameters}
\label{tab:gen_step_hyperparams}
\begin{tabular}{|c|c|c|c|c|c|c|}
\hline
\textbf{Gen. Backbone} & \textbf{Pre. Dataset} & \textbf{T} & $t_0$ & $\tau_{\text{image}}$ & $\tau_{\text{text}}$ \\
\hline
 GLIDE\cite{nichol2021glide} & 67 Million created dataset & 1000 & (0, 600) & Eq.11 & Eq.17 \\
\hline
\end{tabular}
\end{table*}

\clearpage

\begin{figure}[t]
  \begin{center} 
    \includegraphics[width=1\linewidth]{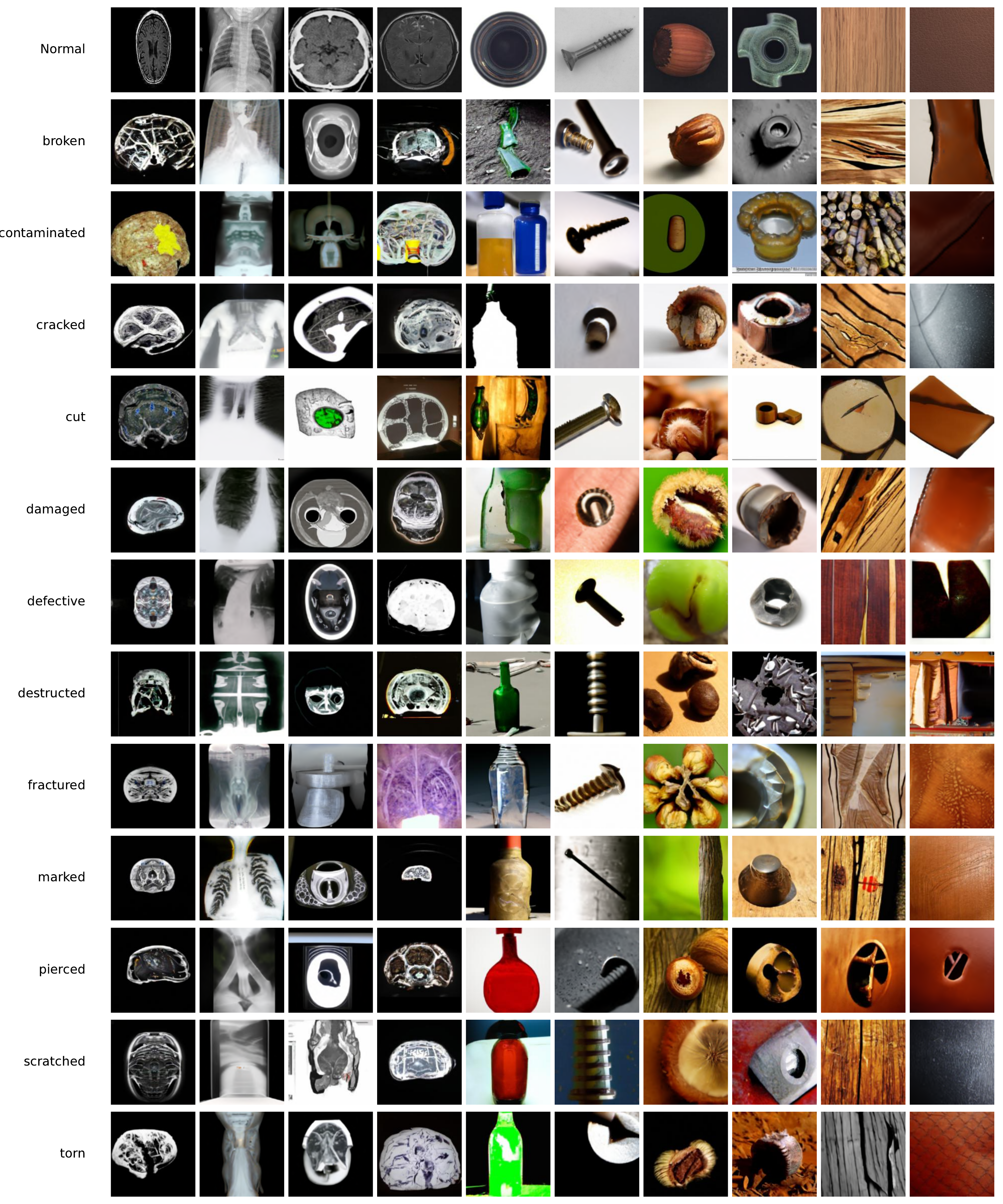}
    \caption{In this experiment, we skipped image conditioning and started the denoising process from pure Gaussian noise, directed by the extracted outlier labels. The resulting generated outlier samples are conditioned solely on text (excluding image conditioning), produced using our pipeline. Comparing the visuals of the generated data in this case with those where the image has also been conditioned showcases the importance of simultaneous image and text conditioning in generating near outlier data. }
    \label{fig:Samples_Plot1}
  \end{center}
\end{figure}

\clearpage

\begin{figure}[t]
  \begin{center}
    \includegraphics[width=1\linewidth]{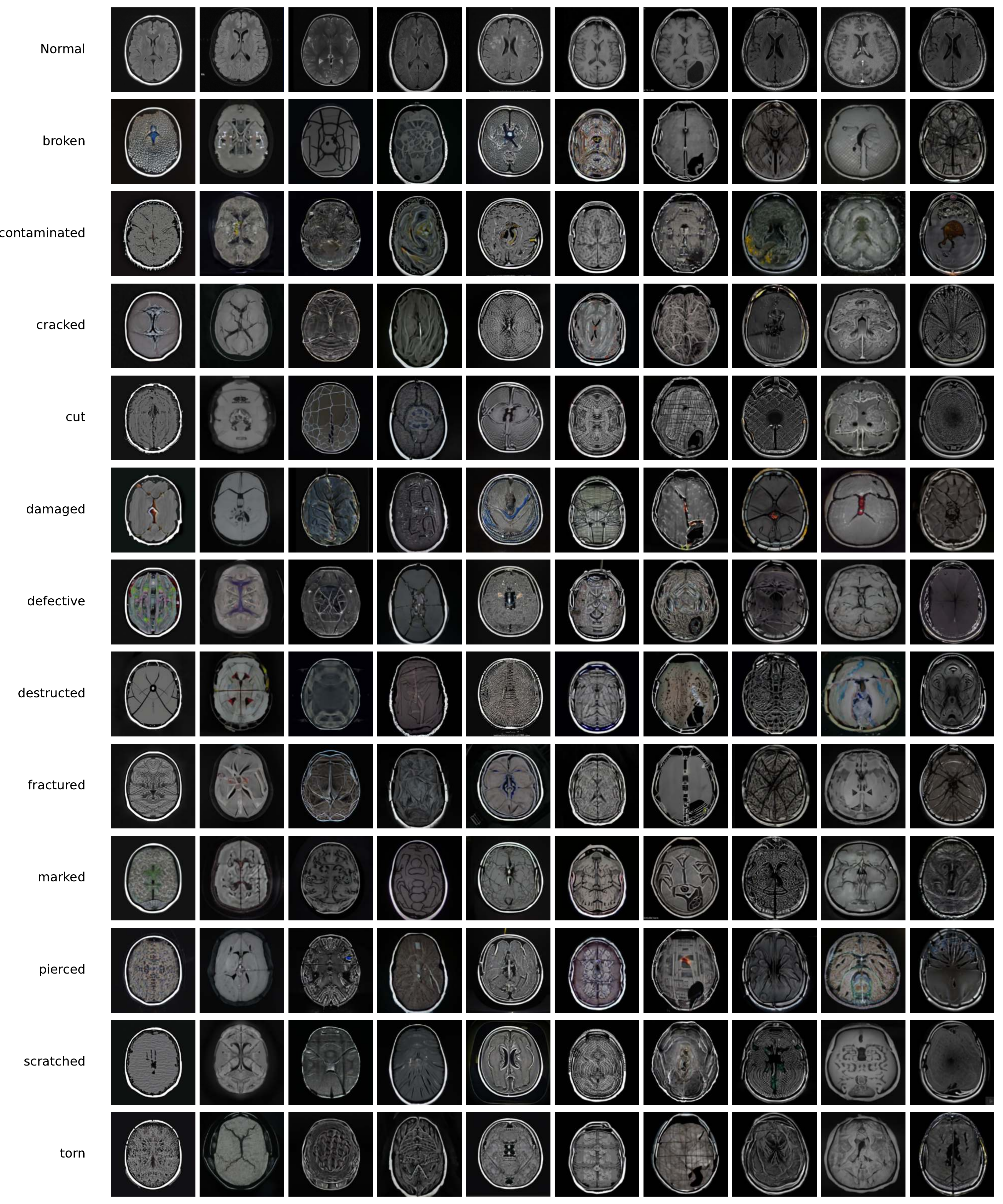}
    \caption{BrainMRI: Examples of generated auxiliary outliers from the BrainMRI dataset, conditioned on negative adjectives and inlier images. The first row depicts inlier images, while the subsequent rows demonstrate generated auxiliary outliers corresponding to the negative adjectives written to the left of each row.}
    \label{fig:Samples_Plot2}
  \end{center}
\end{figure}

\clearpage

\begin{figure}[t]
  \begin{center}
    \includegraphics[width=1\linewidth]{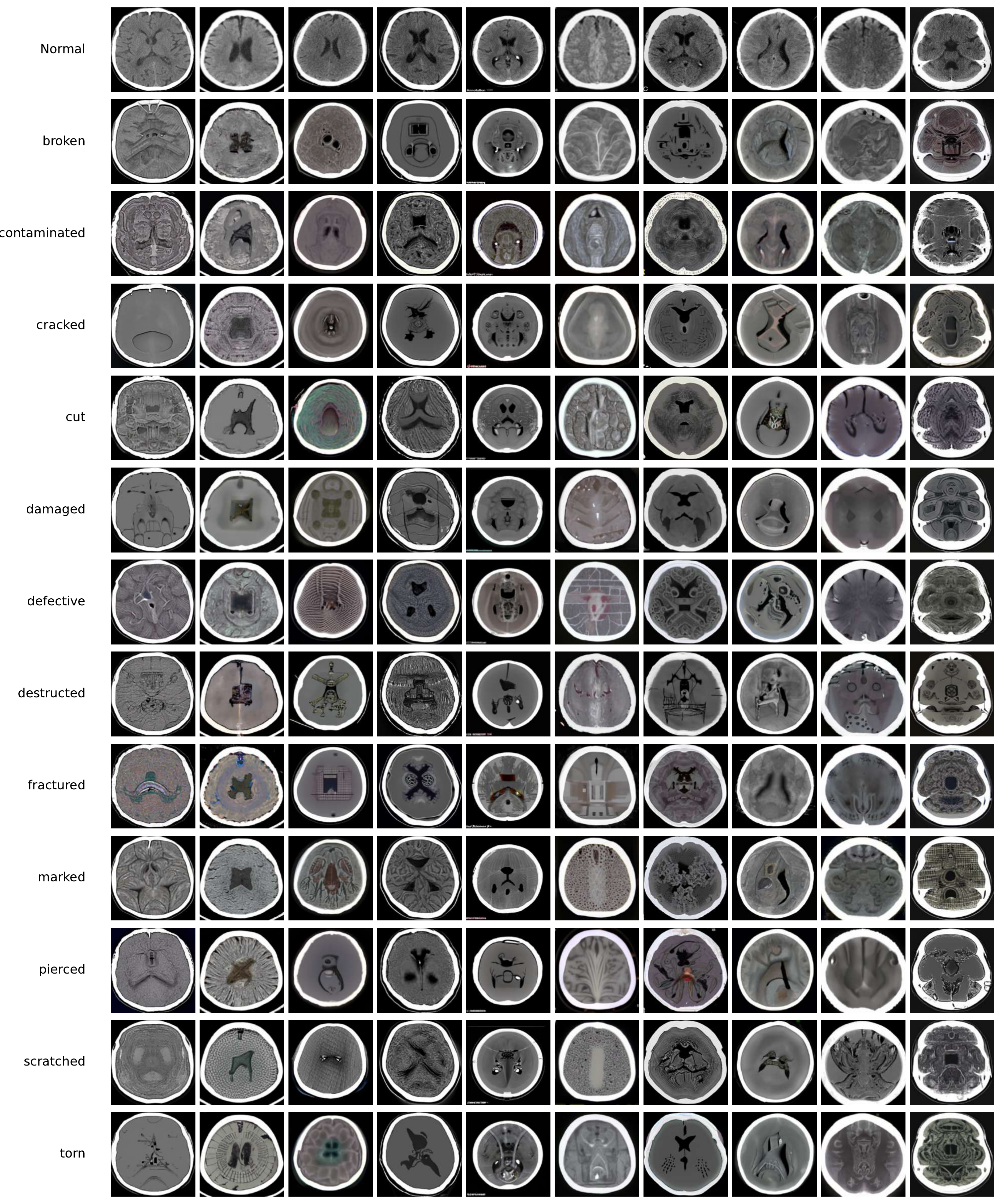}
    \caption{Head-CT: Examples of generated auxiliary outliers from the Head-CT dataset, conditioned on negative adjectives and inlier images. The first row depicts inlier images, while the subsequent rows demonstrate generated auxiliary outliers corresponding to the negative adjectives written to the left of each row.}
    \label{fig:Samples_Plot3}
  \end{center}
\end{figure}

\clearpage

\begin{figure}[t]
  \begin{center}
    \includegraphics[width=1\linewidth]{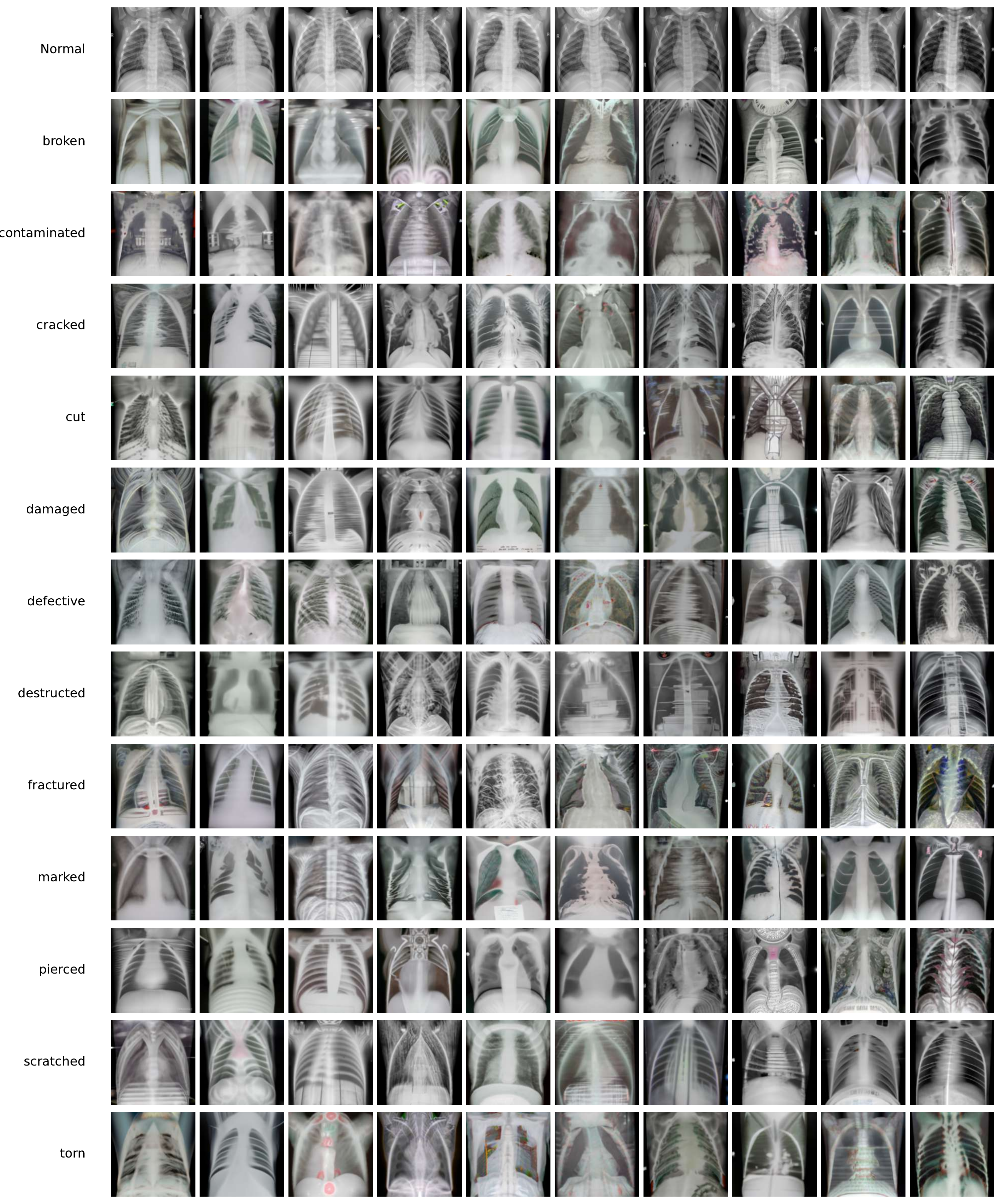}
    \caption{Covid19: Examples of generated auxiliary outliers from the Covid19 dataset, conditioned on negative adjectives and inlier images. The first row depicts inlier images, while the subsequent rows demonstrate generated auxiliary outliers corresponding to the negative adjectives written to the left of each row.}
    \label{fig:Samples_Plot4}
  \end{center}
\end{figure}

\clearpage

\begin{figure}[t]
  \begin{center}
    \includegraphics[width=1\linewidth]{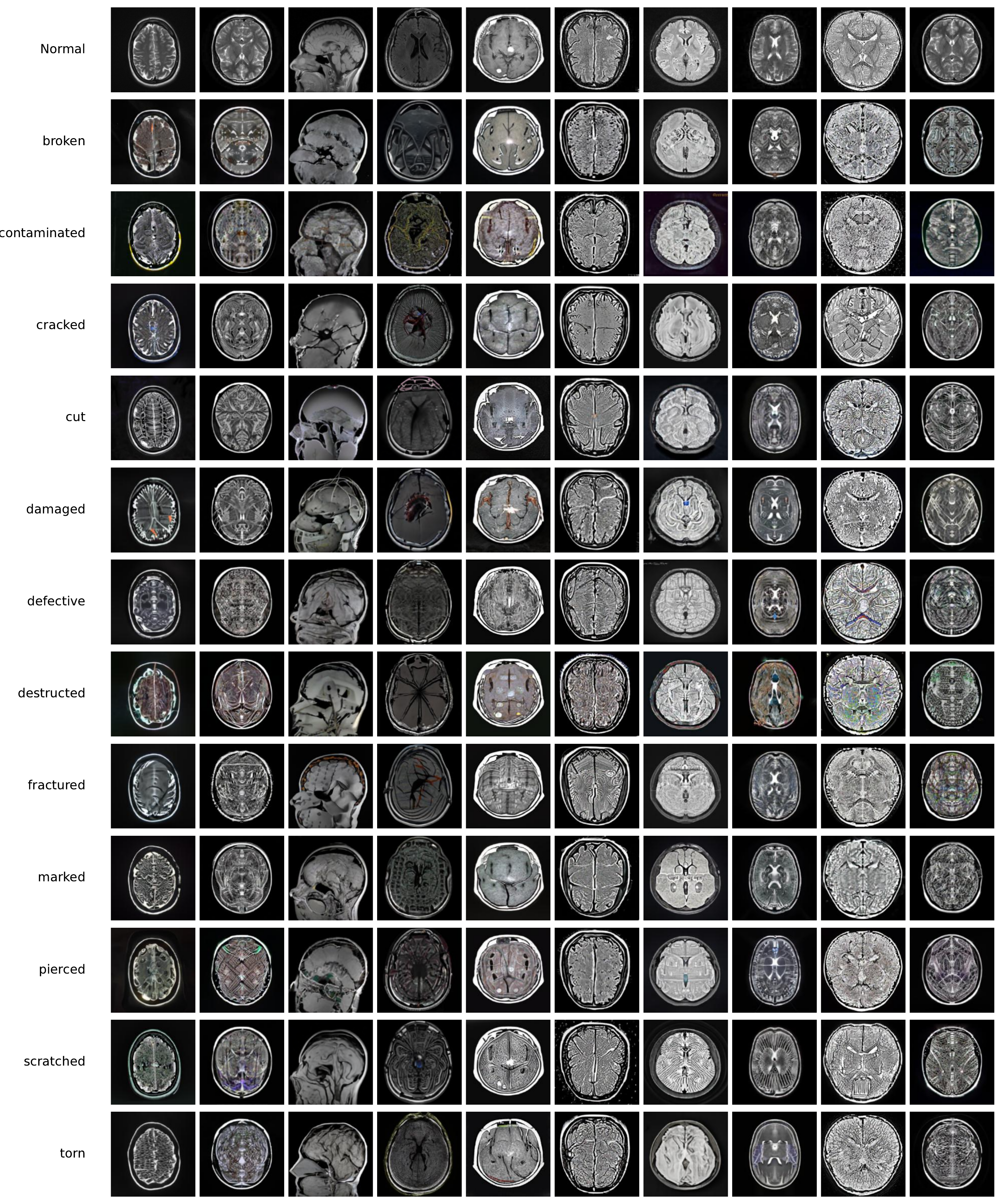}
    \caption{Tumor Detection: Examples of generated auxiliary outliers from the Tumor Detection dataset, conditioned on negative adjectives and inlier images. The first row depicts inlier images, while the subsequent rows demonstrate generated auxiliary outliers corresponding to the negative adjectives written to the left of each row.}
    \label{fig:Samples_Plot5}
  \end{center}
\end{figure}

\clearpage

\begin{figure}[t]
  \begin{center}
    \includegraphics[width=1\linewidth]{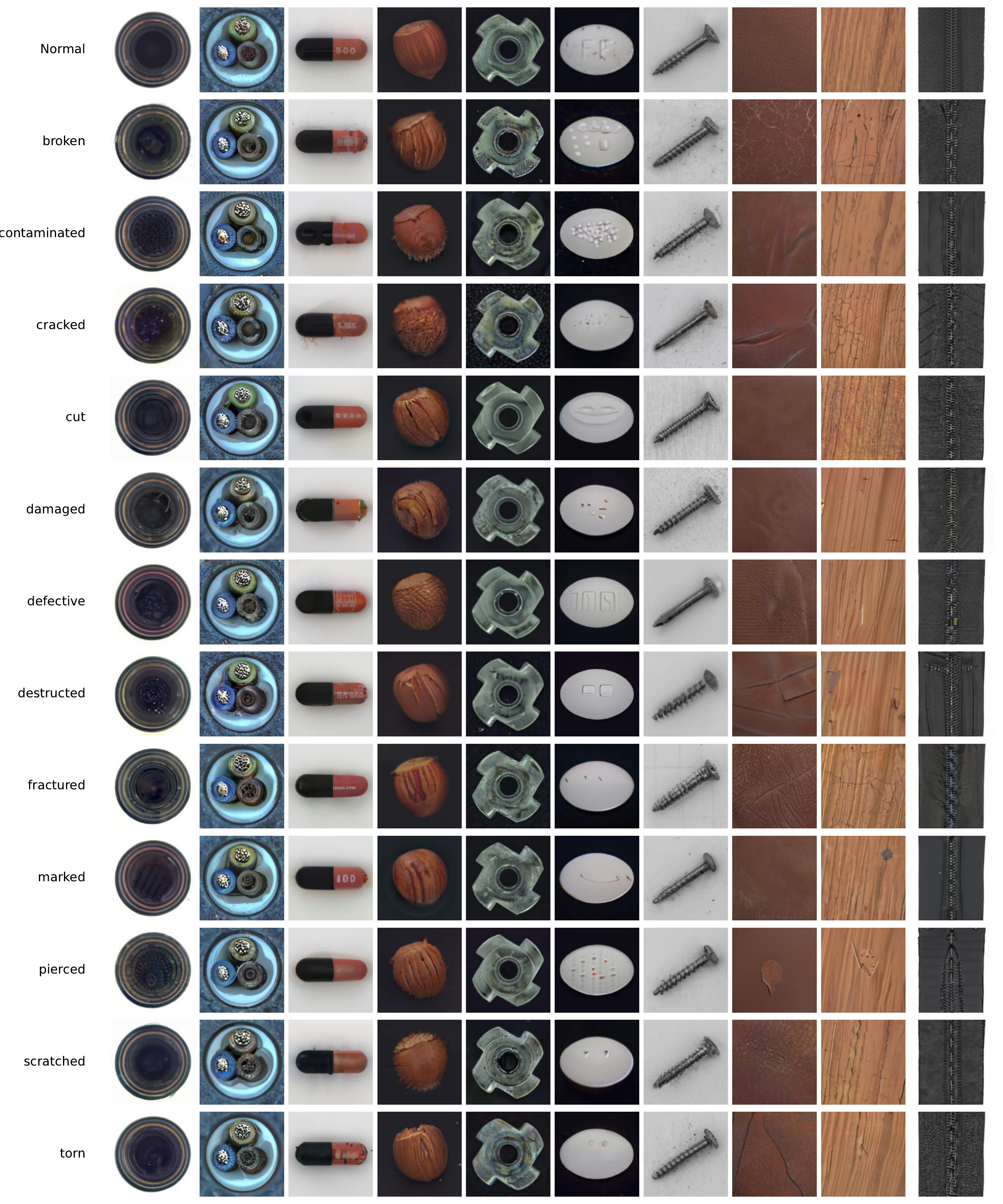}
    \caption{MVTec-AD: Examples of generated auxiliary outliers from the MVTec-AD dataset, conditioned on negative adjectives and inlier images. The first row depicts inlier images, while the subsequent rows demonstrate generated auxiliary outliers corresponding to the negative adjectives written to the left of each row.}
    \label{fig:Samples_Plot6}
  \end{center}
\end{figure}

\clearpage

\begin{figure}[t]
  \begin{center}
    \includegraphics[width=1\linewidth]{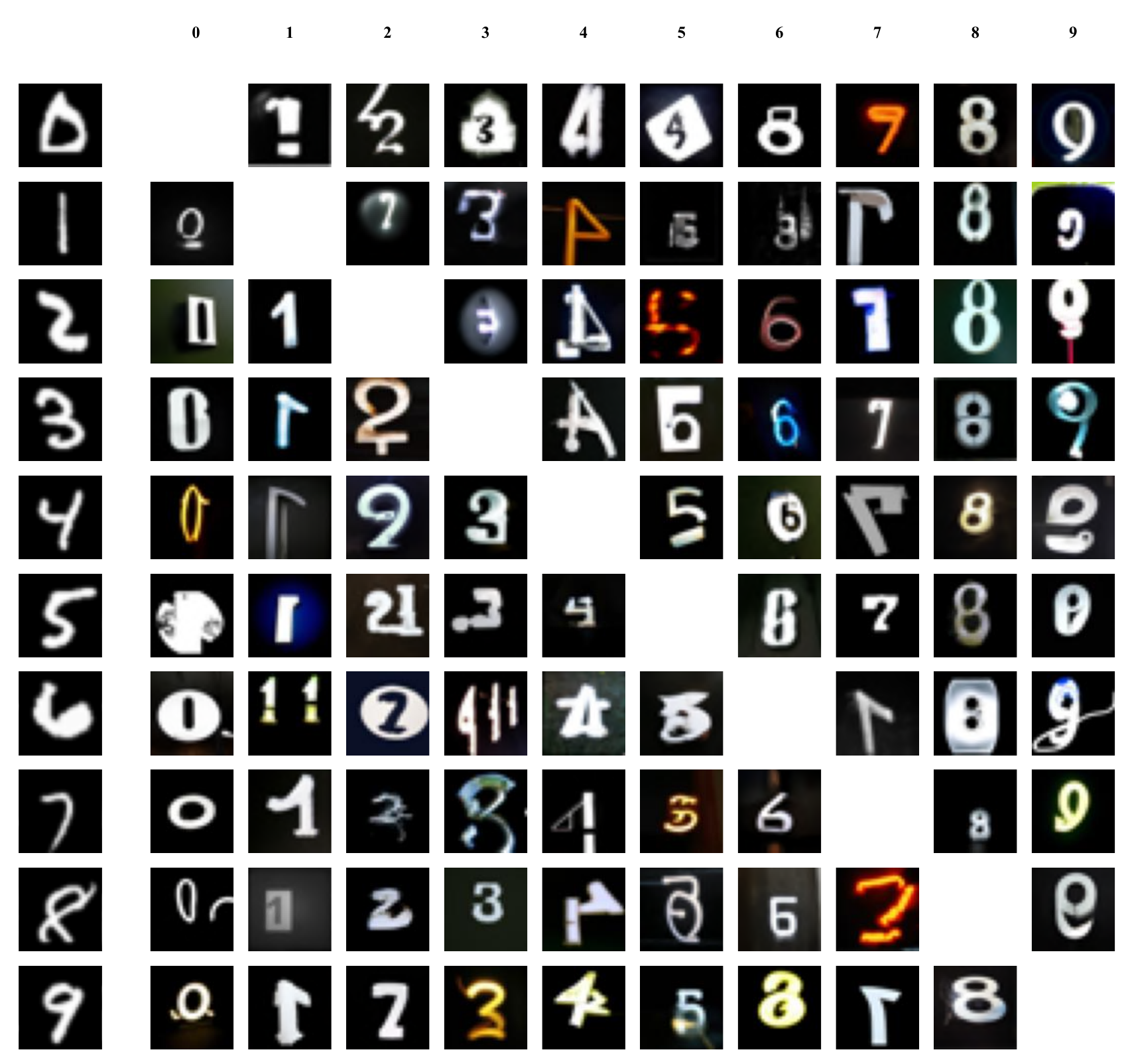}
    \caption{MNIST Adaptive Exposures Grid: This figure illustrates a grid of adaptive exposures of handwritten digits, created using our pipeline with the MNIST dataset, accompanied by their corresponding labels. By utilizing the original data and text prompts, our pipeline generates a variety of exposures that adaptively capture the dataset's distribution while incorporating outlier elements.}
    \label{fig:Samples_Plot7}
  \end{center}
\end{figure}

\begin{figure}[t]
  \begin{center}
    \includegraphics[width=1\linewidth]{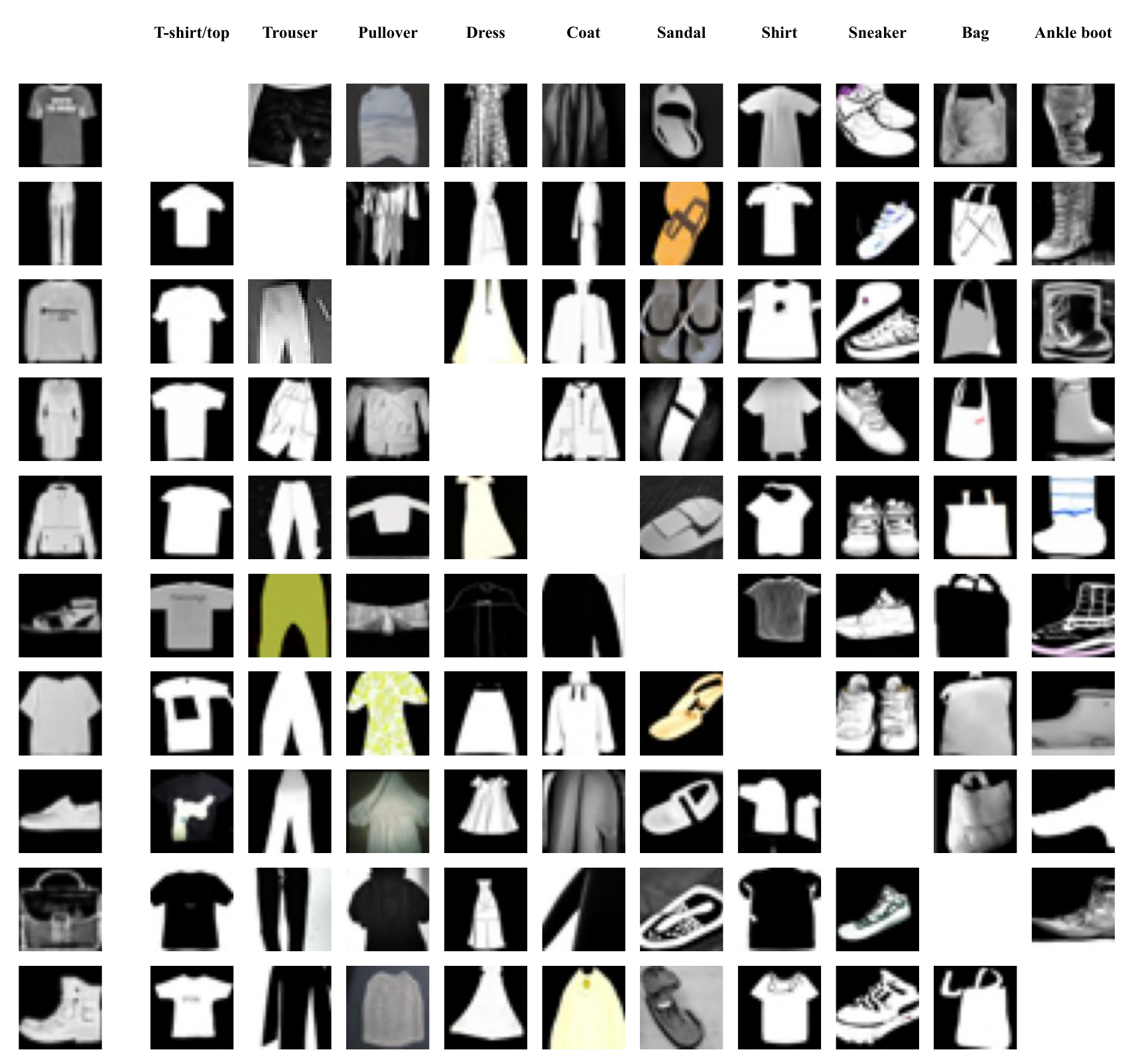}
    \caption{FashionMNIST Adaptive Exposures Grid: This figure illustrates a grid of adaptive exposures of handwritten digits, created using our pipeline with the FashionMNIST dataset, accompanied by their corresponding labels. By utilizing the original data and text prompts, our pipeline generates a variety of exposures that adaptively capture the dataset's distribution while incorporating outlier elements.}
    \label{fig:Samples_Plot8}
  \end{center}
\end{figure}


\end{document}